\def\eqref#1{equation~\ref{#1}}
\def\1{\bm{1}}
\DeclareMathAlphabet{\mathsfit}{\encodingdefault}{\sfdefault}{m}{sl}
\SetMathAlphabet{\mathsfit}{bold}{\encodingdefault}{\sfdefault}{bx}{n}
\newcommand{\E}{\mathbb{E}}
\DeclareMathOperator*{\argmax}{arg\,max}
\DeclareMathOperator*{\argmin}{arg\,min}
\newtheorem{theorem}{Theorem}
\newtheorem{lemma}[theorem]{Lemma}
\newcommand{\comment}[1]{}
\newtheorem{corollary}{Corollary}[theorem]
\newenvironment{sproof}{%
  \proof}{\endproof}
\Crefname{equation}{Eqn.}{Eqns.}
\Crefname{figure}{Fig.}{Figs.}
\Crefname{tabular}{Tab.}{Tabs.}
\def\ourname{IQL\xspace}
\title{Offline Reinforcement Learning \\ with Implicit Q-Learning}
\author{Ilya Kostrikov, Ashvin Nair \& Sergey Levine \\
Department of Electrical Engineering and Computer Science \\
University of California, Berkeley \\
\texttt{\{kostrikov,anair17\}@berkeley.edu}, \texttt{svlevine@eecs.berkeley.edu} 
}
\newcommand{\D}{\mathcal{D}}
\newcommand{\St}{\mathcal{S}}
\newcommand{\A}{\mathcal{A}}
\begin{document}

\maketitle

\begin{abstract}
Offline reinforcement learning requires reconciling two conflicting aims: learning a policy that improves over the behavior policy that collected the dataset, while at the same time minimizing the deviation from the behavior policy so as to avoid errors due to distributional shift. This trade-off is critical, because most current offline reinforcement learning methods need to query the value of unseen actions during training to improve the policy, and therefore need to either constrain these actions to be in-distribution, or else regularize their values. We propose a new offline RL method that never needs to evaluate actions outside of the dataset, but still enables the learned policy to improve substantially over the best behavior in the data through generalization.
The main insight in our work is that, instead of evaluating unseen actions from the latest policy, we can approximate the policy improvement step implicitly by treating the state value function as a random variable, with randomness determined by the action (while still integrating over the dynamics to avoid excessive optimism), and then taking a state conditional upper expectile of this random variable to estimate the value of the best actions in that state. This leverages the generalization capacity of the function approximator to estimate the value of the best available action at a given state without ever directly querying a Q-function with this unseen action.
Our algorithm alternates between fitting this upper expectile value function and backing it up into a Q-function, without any explicit policy. Then, we extract the policy via advantage-weighted behavioral cloning, which also avoids querying out-of-sample actions. We dub our method implicit Q-learning (\ourname).  \ourname is easy to implement, computationally efficient, and only requires fitting an additional critic with an asymmetric L2 loss.\footnote{Our implementation is available at \href{https://github.com/ikostrikov/implicit\_policy\_improvement}{https://github.com/ikostrikov/implicit\_q\_learning}} \ourname demonstrates the state-of-the-art performance on D4RL, a standard benchmark for offline reinforcement learning.
We also demonstrate that \ourname achieves strong performance fine-tuning using online interaction after offline initialization.
\end{abstract}

\section{Introduction}

Offline reinforcement learning (RL) addresses the problem of learning effective policies entirely from previously collected data, without online interaction~\citep{fujimoto2019off, lange2012batch}. This is very appealing in a range of real-world domains, from robotics to logistics and operations research, where real-world exploration with untrained policies is costly or dangerous, but prior data is available. However, this also carries with it major challenges: improving the policy beyond the level of the behavior policy that collected the data requires estimating values for actions other than those that were seen in the dataset, and this, in turn, requires trading off policy improvement against distributional shift, since the values of actions that are too different from those in the data are unlikely to be estimated accurately. Prior methods generally address this by either constraining the policy to limit how far it deviates from the behavior policy~\citep{fujimoto2019off, wu2019behavior, fujimoto2021minimalist, kumar2019stabilizing, nair2020awac, wang2020critic}, or by regularizing the learned value functions to assign low values to out-of-distribution actions~\citep{kumar2020conservative, kostrikov2021offline}. Nevertheless, this imposes a trade-off between how much the policy improves and how vulnerable it is to misestimation due to distributional shift. Can we devise an offline RL method that avoids this issue by never needing to directly query or estimate values for actions that were not seen in the data?

In this work, we start from an observation that \emph{in-distribution} constraints widely used in prior work might not be sufficient to avoid value function extrapolation, and we ask whether it is possible to learn an optimal policy with \emph{in-sample learning}, without \emph{ever} querying the values of any unseen actions.
The key idea in our method is to approximate an upper expectile of the distribution over values with respect to the distribution of dataset actions for each state. We alternate between fitting this value function with expectile regression, and then using it to compute Bellman backups for training the $Q$-function.
We show that we can do this simply by modifying the loss function in a SARSA-style TD backup, without ever using out-of-sample actions in the target value.
Once this $Q$-function has converged, we extract the corresponding policy using advantage-weighted behavioral cloning.
This approach does not require explicit constraints or explicit regularization of out-of-distribution actions during value function training, though our policy extraction step does implicitly enforce a constraint, as discussed in prior work on advantage-weighted regression~\citep{peters2007reinforcement, peng2019advantage, nair2020awac, wang2020critic}.

Our main contribution is implicit Q-learning (\ourname), a new offline RL algorithm that avoids ever querying values of unseen actions while still being able to perform multi-step dynamic programming updates.
Our method is easy to implement by making a small change to the loss function in a simple SARSA-like TD update and is computationally very efficient. Furthermore, our approach demonstrates the state-of-the-art performance on D4RL, a popular benchmark for offline reinforcement learning. In particular, our approach significantly improves over the prior state-of-the-art on challenging Ant Maze tasks that require to ``stitch'' several sub-optimal trajectories. Finally, we demonstrate that our approach is suitable for finetuning; after initialization from offline RL, \ourname is capable of improving policy performance utilizing additional interactions.

\section{Related work}

A significant portion of recently proposed offline RL methods are based on either constrained or regularized approximate dynamic programming (e.g., Q-learning or actor-critic methods), with the constraint or regularizer serving to limit deviation from the behavior policy. We will refer to these methods as ``multi-step dynamic programming'' algorithms, since they perform true dynamic programming for multiple iterations, and therefore can in principle recover the optimal policy if provided with high-coverage data.
The constraints can be implemented via an explicit density model~\citep{wu2019behavior, fujimoto2019off, kumar2019stabilizing}, implicit divergence constraints~\citep{nair2020awac, wang2020critic, peters2007reinforcement, peng2019advantage}, or by adding a supervised learning term to the policy improvement objective~\citep{fujimoto2021minimalist}. Several works have also proposed to directly regularize the Q-function to produce low values for out-of-distribution actions~\citep{kostrikov2021offline, kumar2020conservative}. Our method is also a multi-step dynamic programming algorithm. However, in contrast to prior works, our method completely avoids directly querying the learned Q-function with unseen actions during training, removing the need for any constraint during this stage, though the subsequent policy extraction, which is based on advantage-weighted regression~\citep{peng2019advantage,nair2020awac}, does apply an implicit constraint. However, this policy does not actually influence value function training.

In contrast to multi-step dynamic programming methods, several recent works have proposed methods that rely either on a single step of policy iteration, fitting the value function or Q-function of the behavior policy and then extracting the corresponding greedy policy~\citep{peng2019advantage,brandfonbrener2021offline}, or else avoid value functions completely and utilize behavioral cloning-style objectives~\citep{chen2021decision}. We collectively refer to these as ``single-step''
approaches. These methods avoid needing to query unseen actions as well, since they either use no value function at all, or learn the value function of the behavior policy. Although these methods are simple to implement and effective on the MuJoCo locomotion tasks in D4RL, we show that such single-step methods perform very poorly on more complex datasets in D4RL, which require combining parts of suboptimal trajectories (``stitching''). Prior multi-step dynamic programming methods perform much better in such settings, as does our method. We discuss this distinction in more detail in \Cref{sec:example}. Our method also shares the simplicity and computational efficiency of single-step approaches, providing an appealing combination of the strengths of both types of methods.

Our method is based on estimating the characteristics of a random variable. Several recent works involve approximating statistical quantities of the value function distribution. In particular, quantile regression~\citep{koenker2001quantile} has been previously used in reinforcement learning to estimate the quantile function of a state-action value function~\citep{dabney2018distributional, dabney2018implicit,kuznetsov2020controlling}.
Although our method is related, in that we perform expectile regression, our aim is not to estimate the distribution of values that results from stochastic transitions, but rather estimate expectiles of the state value function with respect to random actions.
This is a very different statistic: our aim is not to determine how the $Q$-value can vary with different future outcomes, but how the $Q$-value can vary with different actions \emph{while averaging together future outcomes due to stochastic dynamics}. While prior work on distributional RL can also be used for offline RL, it would suffer from the same action extrapolation issues as other methods, and would require similar constraints or regularization, while our method does not.

\section{Preliminaries}

The RL problem is formulated in the context of a Markov decision process (MDP) $(\St, \A, p_0(s), p(s'|s,a), r(s,a), \gamma)$, where $\St$ is a state space, $\A$ is an action space, $p_0(s)$ is a distribution of initial states, $p(s' | s, a)$ is the environment dynamics, $r(s,a)$ is a reward function, and $\gamma$ is a discount factor. The agent interacts with the MDP according to a policy $\pi(a|s)$. The goal is to obtain a policy that maximizes the cumulative discounted returns: 
$$\pi^* = \argmax_\pi \E_\pi\left[\sum_{t=0}^\infty\gamma^t r(s_t, a_t) | \allowbreak s_0 \sim p_0(\cdot), \allowbreak  a_t \sim \pi(\cdot|s_t), \allowbreak s_{t+1} \sim p(\cdot|s_t, a_t) \right].$$
Off-policy RL methods based on approximate dynamic programming typically utilize a state-action value function ($Q$-function), referred to as $Q(s, a)$, which corresponds to the discounted returns obtained by starting from the state $s$ and action $a$, and then following the policy $\pi$.

\paragraph{Offline reinforcement learning.} In contrast to online (on-policy or off-policy) RL methods, offline RL uses previously collected data without any additional data collection. Like many recent offline RL methods, our work builds on approximate dynamic programming methods that minimize temporal difference error, according to the following loss:
\begin{equation}
L_{TD}(\theta) = \E_{(s, a, s')\sim\D}[(r(s,a) + \gamma \max_{a'}Q_{\hat{\theta}}(s',a') - Q_\theta(s,a))^2],
\label{eqn:td}
\end{equation}
where $\D$ is the dataset,
$Q_\theta(s,a)$ is a parameterized Q-function, $Q_{\hat{\theta}}(s,a)$ is a target network (e.g., with soft parameters updates defined via Polyak averaging), and the policy is defined as $\pi(s) = \argmax_a Q_\theta(s,a)$. Most recent offline RL methods modify either the value function loss (above) to regularize the value function in a way that keeps the resulting policy close to the data, or constrain the $\argmax$ policy directly. This is important because out-of-distribution actions $a'$ can produce erroneous values for $Q_{\hat{\theta}}(s',a')$ in the above objective, often leading to \emph{overestimation} as the policy is defined to maximize the (estimated) Q-value.

\section{Implicit Q-Learning}

In this work, we aim to entirely avoid querying \emph{out-of-sample} (unseen) actions in our TD loss.
We start by considering fitted $Q$ evaluation with a SARSA-style objective, which simply aims to learn the value of the dataset policy $\pi_\beta$ (also called the behavior policy):
\begin{equation}
\label{eqn:td_sarsa}
L(\theta)=\E_{(s,a,s',a')\sim\D}[(r(s,a) + \gamma Q_{\hat{\theta}}(s', a') - Q_\theta(s,a))^2].
\end{equation}
This objective never queries values for out-of-sample actions, in contrast to \Cref{eqn:td}. One specific property of this objective that is important for this work is that it uses mean squared error (MSE) that fits $Q_\theta(s,a)$ to predict the mean statistics of the TD targets. Thus, if we assume unlimited capacity and no sampling error, the optimal parameters should satisfy
\begin{equation}
    Q_{\theta^*}(s,a) \approx r(s,a) + \gamma \E_{\substack{s'\sim p(\cdot|s,a)\\a'\sim\pi_\beta(\cdot|s)}}[ Q_{\hat{\theta}}(s',a')].
\label{eqn:q_star}
\end{equation}
Prior work~\citep{brandfonbrener2021offline, peng2019advantage}
has proposed directly using this objective to learn $Q^{\pi_\beta}$, and then train the policy $\pi_\psi$ to maximize $Q^{\pi_\beta}$. This avoids any issues with out-of-distribution actions, since the TD loss only uses dataset actions. However, while this procedure works well empirically on simple MuJoCo locomotion tasks in D4RL, we will show that it performs very poorly on more complex tasks that benefit from multi-step dynamic programming. In our method, which we derive next, we retain the benefits of using this SARSA-like objective, but modify it so that it allows us to perform multi-step dynamic programming and learn a near-optimal Q-function.

Our method will perform a $Q$-function update similar to \Cref{eqn:td_sarsa}, but we will aim to estimate the maximum $Q$-value over actions that are in the support of the data distribution. Crucially, we will show that it is possible to do this \emph{without ever querying the learned Q-function on out-of-sample actions} by utilizing expectile regression. Formally, the value function we aim to learn is given by:
\begin{equation}
L(\theta) = \E_{(s, a, s')\sim\D}[(r(s,a) + \gamma  \max_{\substack{a'\in \A \\\text{s.t. }\pi_\beta(a'|s') > 0}}Q_{\hat{\theta}}(s',a') - Q_\theta(s,a))^2].
\label{eqn:bq_learning}
\end{equation}
Our algorithm, implicit Q-Learning (\ourname), aims to estimate this objective while evaluating the $Q$-function only on the state-action pairs in the dataset. To this end, we propose to fit $Q_\theta(s,a)$ to estimate state-conditional expectiles of the target values, and show that
specific expectiles approximate the maximization defined above.
In \Cref{sec:iql} we show that this approach performs multi-step dynamic programming in theory, and in \Cref{sec:example} we show that it does so in practice.

\subsection{Expectile Regression} 
Practical methods for estimating various statistics of a random variable have been thoroughly studies in applied statistics and econometrics. The $\tau \in (0, 1)$ expectile of some random variable $X$ is defined as a solution to the asymmetric least squares problem:

$$\argmin_{m_\tau} \E_{x\sim X}[L_2^\tau(x-m_\tau)],$$
where $L_2^\tau(u) = |\tau-\mathbbm{1}(u < 0)|u^2$.

That is, for $\tau > 0.5$, this asymmetric loss function downweights the contributions of $x$ values smaller than $m_\tau$ while giving more weights to larger values (see \Cref{fig:expectiles}, left). 
Expectile regression is closely related to quantile regression \citep{koenker2001quantile}, which is a popular technique for estimating quantiles of a distribution widely used in reinforcement learning~\citep{dabney2018distributional, dabney2018implicit}
\footnote{Our method could also be derived with quantiles, but since we are not interested in learning all of the expectiles/quantiles, unlike prior work~\citep{dabney2018distributional, dabney2018implicit}, it is more convenient to estimate a single expectile because this involves a simple modification to the MSE loss that is already used in standard RL methods. We found it to work somewhat better than quantile regression with its corresponding $\ell_1$ loss.}.
The quantile regression loss is defined as an asymmetric $\ell_1$ loss.

\begin{figure}
\centering
\begin{subfigure}[t]{0.32\textwidth}
    \centering
    \includegraphics[width=\textwidth]{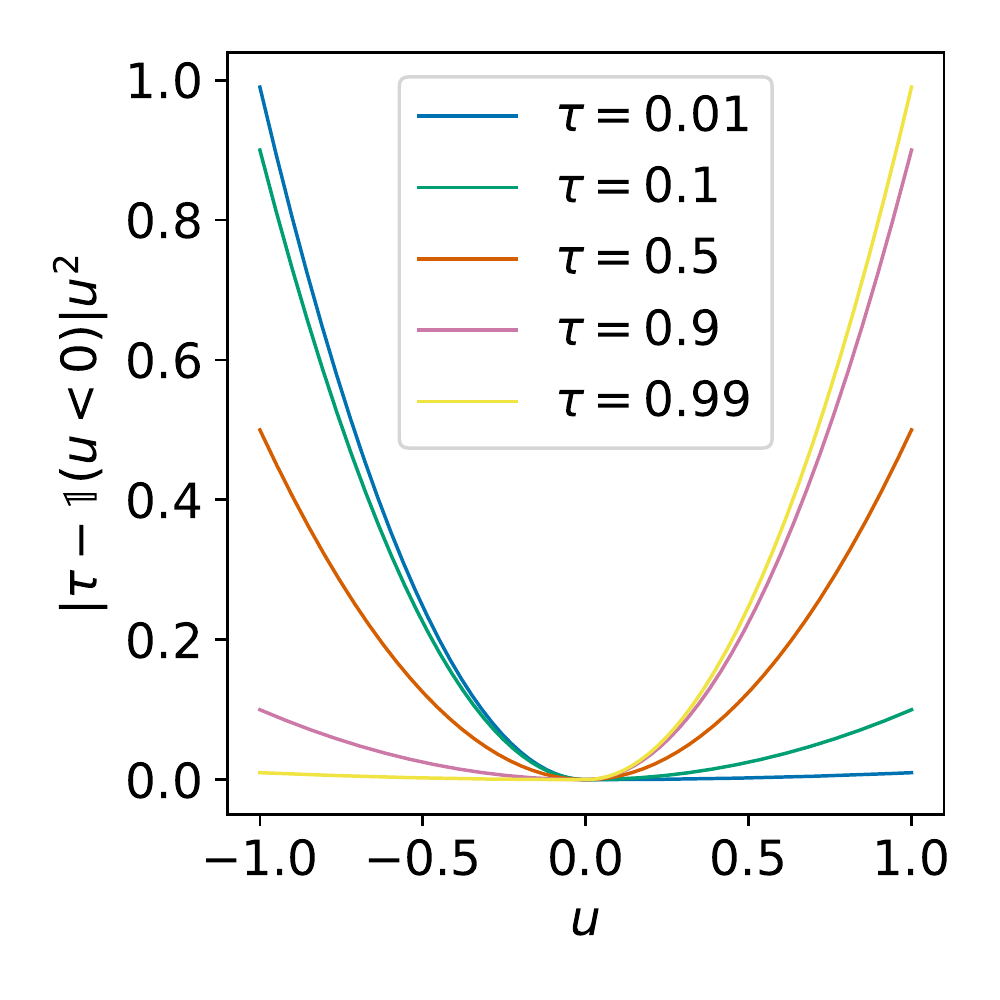}
\end{subfigure}
\begin{subfigure}[t]{0.32\textwidth}
    \centering
    \includegraphics[width=\textwidth]{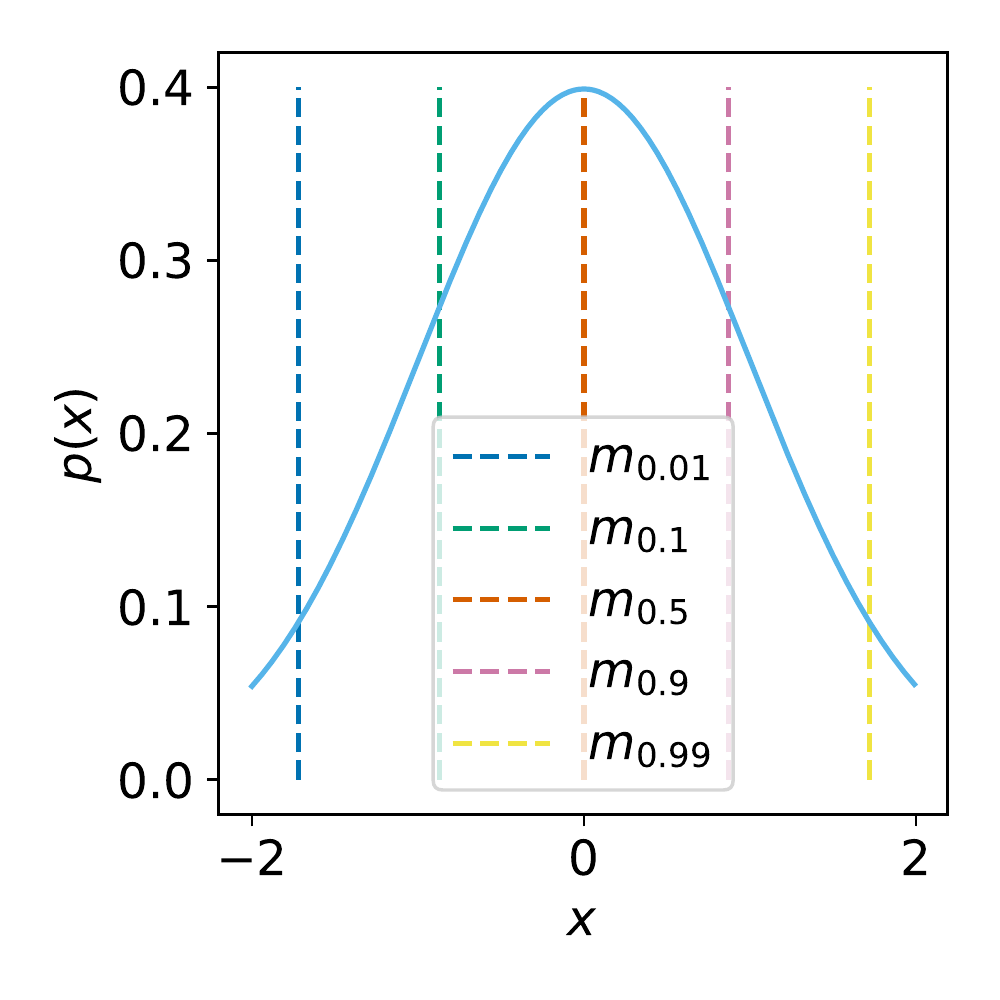}
\end{subfigure}
\begin{subfigure}[t]{0.32\textwidth}
    \centering
    \includegraphics[width=\textwidth]{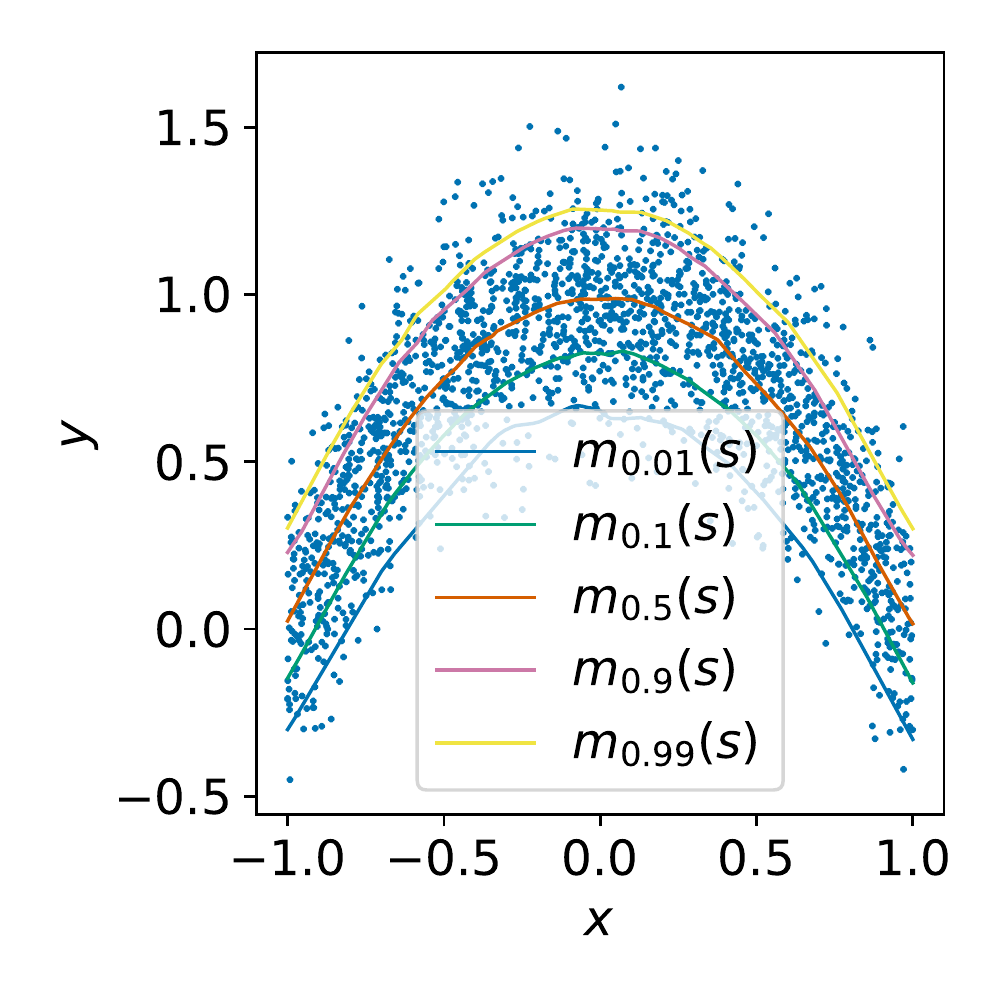}
\end{subfigure}
\caption{\textbf{Left:} The asymmetric squared loss used for expectile regression. $\tau=0.5$ corresponds to the standard mean squared error loss, while $\tau=0.9$ gives more weight to positives differences.
\textbf{Center:} Expectiles of a normal distribution.
    \textbf{Right:} an example of estimating state conditional expectiles of a two-dimensional random variable. Each $x$ corresponds to a distribution over $y$. We can approximate a maximum of this random variable with expectile regression: $\tau=0.5$ correspond to the conditional mean statistics of the distribution, while $\tau \approx 1$ approximates the maximum operator over in-support values of $y$.\label{fig:expectiles}}
\end{figure}

We can also use this formulation to predict expectiles of a conditional distribution:
$$\argmin_{m_\tau(x)} \E_{(x,y)\sim \D}[L_2^\tau(y-m_\tau(x))].$$
\Cref{fig:expectiles} (right) illustrates conditional expectile regression on a simple two-dimensional distribution. Note that we can optimize this objective with stochastic gradient descent. It provides unbiased gradients and is easy to implement with standard machine learning libraries.

\subsection{Learning the Value Function with Expectile Regression}

Expectile regression provides us with a powerful framework to estimate statistics of a random variable beyond mean regression. We can use expectile regression to modify the policy evaluation objective in \Cref{eqn:td_sarsa} to predict an upper expectile of the TD targets that approximates the maximum of $r(s,a) + \gamma Q_{\hat{\theta}}(s',a')$ over actions $a'$ constrained to the dataset actions, as in \Cref{eqn:bq_learning}.
This leads to the following expectile regression objective:
\begin{align*}
&L(\theta)=\E_{(s,a,s',a')\sim\D}[L_2^\tau(r(s,a) + \gamma Q_{\hat{\theta}}(s', a')- Q_\theta(s,a))].
\end{align*}
However, this formulation has a significant drawback. Instead of estimating expectiles just with respect to the actions in the support of the data, it also incorporates stochasticity that comes from the environment dynamics $s'\sim p(\cdot|s,a)$. Therefore, a large target value might not necessarily reflect the existence of a single action that achieves that value, but rather a ``lucky'' sample that happened to have transitioned into a good state.
We resolve this by introducing a separate value function that approximates an expectile only with respect to the action distribution, leading to the following loss:
\begin{equation}
    \label{eqn:fit_v}
    L_V(\psi) = \E_{(s,a)~\sim \D}[L_2^\tau(Q_{\hat{\theta}}(s,a) - V_\psi(s))].
\end{equation}
We can then use this estimate to update the $Q$-functions with the MSE loss, which averages over the stochasticity from the transitions and avoids the ``lucky'' sample issue mentioned above:
\begin{equation}
    \label{eqn:fit_q}
    L_Q(\theta) = \E_{(s,a,s')~\sim \D}[(r(s,a) + \gamma V_\psi(s') - Q_{\theta}(s,a))^2].
\end{equation}
Note that these losses do not use any explicit policy, and only utilize actions from the dataset for both objectives, similarly to SARSA-style policy evaluation.
In \Cref{sec:analysis}, we will show that this procedure recovers the optimal Q-function under some assumptions.

\subsection{Policy Extraction and Algorithm Summary}

\begin{wrapfigure}{R}{0.35\textwidth}
    \begin{minipage}{0.35\textwidth}
        \begin{algorithm}[H]
            \caption{Implicit Q-learning}
            \begin{algorithmic}
            \label{alg:iql}
                \State Initialize parameters $\psi$, $\theta$, $\hat{\theta}$, $\phi$.
                \State TD learning (IQL):
                \For{each gradient step}
                \State $\psi \leftarrow \psi - \lambda_V \nabla_\psi L_V(\psi)$ 
                \State $\theta \leftarrow \theta - \lambda_Q \nabla_{\theta} L_Q(\theta)$
                \State $\hat{\theta} \leftarrow (1-\alpha)\hat{\theta} + \alpha\theta$
                \EndFor
                \State Policy extraction (AWR):
                \For{each gradient step}
                \State $\phi \leftarrow \phi - \lambda_\pi \nabla_\phi L_\pi(\phi)$
                \EndFor
            \end{algorithmic}
        \end{algorithm}
    \end{minipage}
\end{wrapfigure}

While our modified TD learning procedure learns an approximation to the optimal Q-function, it does not explicitly represent the corresponding policy, and therefore requires a separate policy extraction step. In the spirit of preserving simplicity and efficiency, we aim for a simple method for policy extraction. As before, we aim to avoid using out-of-samples actions. Therefore, we extract the policy using advantage weighted regression~\citep{peters2007reinforcement,wang2018exponentially,peng2019advantage,nair2020awac}:
\begin{equation}
    L_\pi(\phi) = \E_{(s,a)~\sim \D}[\exp(\beta(Q_{\hat{\theta}}(s,a) - V_\psi(s)))\log \pi_\phi(a|s)],
    \label{eqn:awr}
\end{equation}
where $\beta \in [0, \infty)$  is an inverse temperature. For smaller hyperparameter values, the objective behaves similarly to behavioral cloning, while for larger values, it attempts to recover the maximum of the $Q$-function.
As shown in prior work, this objective learns a policy that maximizes the $Q$-values subject to a distribution constraint~\citep{peters2007reinforcement,peng2019advantage,nair2020awac}.

Our final algorithm consists of two stages. First, we fit the value function and $Q$, performing a number of gradient updates alternating between Eqn.~(\ref{eqn:fit_v}) and (\ref{eqn:fit_q}). Second, we perform stochastic gradient descent on \Cref{eqn:awr}.
For both steps, we use a version of clipped double Q-learning~\citep{fujimoto2018addressing}, taking a minimum of two $Q$-functions for $V$-function and policy updates.
We summarize our final method in \Cref{alg:iql}. Note that the policy does not influence the value function in any way, and therefore extraction could be performed either concurrently or after TD learning. Concurrent learning provides a way to use \ourname with online finetuning, as we discuss in \Cref{sec:finetune}.

\subsection{Analysis}

\label{sec:analysis}

In this section, we will show that \ourname can recover the optimal value function under the dataset support constraints.
First, we prove a simple lemma that we will then use to show how our approach can enable learning the optimal value function.
\begin{lemma}
\label{lem:1}
Let $X$ be a real-valued random variable with a bounded support
and supremum of the support is $x^*$. Then,
$$
\lim_{\tau\rightarrow1} m_\tau = x^*
$$
\end{lemma}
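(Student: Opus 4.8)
The plan is to characterize $m_\tau$ through the first-order optimality condition of the expectile objective and then analyze its behavior as $\tau\to1$. First I would differentiate $f(m):=\E_{x\sim X}[L_2^\tau(x-m)]$ in $m$. Since $L_2^\tau(u)=|\tau-\mathbbm{1}(u<0)|u^2$ is $C^1$ (the quadratic kills the jump of the indicator at $u=0$, so the derivative there is $0$), and $X$ is bounded so that differentiation under the expectation is justified by dominated convergence, I obtain the stationarity condition
$$\tau\,\E[(X-m_\tau)_+]=(1-\tau)\,\E[(m_\tau-X)_+],$$
equivalently, whenever the right-hand side is nonzero,
$$\frac{\E[(X-m_\tau)_+]}{\E[(m_\tau-X)_+]}=\frac{1-\tau}{\tau}.$$
Because $f$ is strictly convex for non-degenerate $X$, the minimizer $m_\tau$ is unique and is exactly pinned down by this condition. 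The degenerate case $X\equiv x^*$ almost surely is immediate, so I would dispose of it first.

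Next I would establish the two bounds that frame the limit. Since $X\le x^*$ almost surely, substituting any $m>x^*$ into the stationarity condition forces $\E[(m-X)_+]=0$, which is impossible in the non-degenerate case; hence $m_\tau\le x^*$ for every $\tau\in(0,1)$, giving $\limsup_{\tau\to1}m_\tau\le x^*$. It then remains only to show $\liminf_{\tau\to1}m_\tau\ge x^*$.

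For the lower bound I would argue by contradiction along a subsequence. Suppose $\tau_n\to1$ with $m_{\tau_n}\to L<x^*$. Two estimates then clash. The denominator is uniformly bounded: bounded support yields $|m_{\tau_n}-X|\le M$ for a constant $M$, so $\E[(m_{\tau_n}-X)_+]\le M$. The numerator is bounded below by a positive constant: because $x^*$ is the supremum of the support, $p:=\Pr(X>x^*-\epsilon)>0$ for every $\epsilon>0$; choosing $\epsilon=(x^*-L)/3$ and taking $n$ large enough that $m_{\tau_n}\le x^*-2\epsilon$, one gets
$$\E[(X-m_{\tau_n})_+]\ge (x^*-\epsilon-m_{\tau_n})\,\Pr(X>x^*-\epsilon)\ge \epsilon\,p>0.$$
Hence the ratio stays at least $\epsilon p/M>0$, contradicting the stationarity identity, whose left side equals $(1-\tau_n)/\tau_n\to0$. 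Therefore $L=x^*$, and combined with the $\limsup$ bound this gives $m_\tau\to x^*$.

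The main obstacle I anticipate is not the limiting argument but the analytic bookkeeping that makes it rigorous: justifying differentiation under the expectation, confirming strict convexity so that ``the expectile'' is well defined and unique, and — the genuinely load-bearing step — converting ``supremum of the support'' into the quantitative mass bound $\Pr(X>x^*-\epsilon)>0$ that drives the lower bound on the numerator. Everything else is routine monotonicity and boundedness estimation.
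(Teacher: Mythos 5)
Your proof is correct, but it takes a genuinely different route from the paper's. The paper's argument (given only as a sketch) rests on two ingredients: monotonicity of $\tau\mapsto m_\tau$, and the assertion that the family of expectiles has supremum $x^*$; the limit then follows because a bounded non-decreasing function converges to its supremum as $\tau\to1$. You instead work entirely from the Newey--Powell first-order condition $\tau\,\E[(X-m_\tau)_+]=(1-\tau)\,\E[(m_\tau-X)_+]$: the upper bound $m_\tau\le x^*$ comes from noting that any $m>x^*$ annihilates the left-hand side but not the right, and the matching lower bound comes from a contradiction that plays the uniform bound $\E[(m_{\tau_n}-X)_+]\le M$ against the mass bound $\Pr(X>x^*-\epsilon)>0$, which is exactly what ``$x^*$ is the supremum of the support'' provides. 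This buys you something real: the step the paper's sketch leaves unargued --- that $\sup_\tau m_\tau$ equals $x^*$ rather than something strictly smaller --- is precisely the content of your lower-bound estimate, so your version is self-contained; you also never need monotonicity of $m_\tau$ in $\tau$, and the argument is quantitative (it would yield a rate given control of $\Pr(X>x^*-\epsilon)$). Two small cleanups: the objective $m\mapsto\E[L_2^\tau(X-m)]$ is strictly convex for \emph{every} $X$ with bounded support, degenerate or not, since the integrand has one-sided second derivatives $2\tau$ and $2(1-\tau)$, both positive, so uniqueness of $m_\tau$ needs no case split; and in the contradiction step it is slightly cleaner to compare the two sides of the product form of the stationarity identity directly ($\tau_n\E[(X-m_{\tau_n})_+]\ge\tau_n\,\epsilon p$ bounded away from zero versus $(1-\tau_n)\E[(m_{\tau_n}-X)_+]\le(1-\tau_n)M\to0$) rather than forming the ratio, which sidesteps the caveat about a vanishing denominator.
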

\begin{sproof}
One can show that expectiles of a random variable have the same supremum $x^*$. Moreover, for all $\tau_1$ and $\tau_2$ such that $\tau_1 < \tau_2$, we get $m_{\tau_1} \le m_{\tau_2}$. Therefore, the limit follows from the properties of bounded monotonically non-decreasing functions.
\end{sproof}

In the following theorems, we show that under certain assumptions, our method indeed approximates the optimal state-action value $Q^*$ and performs multi-step dynamical programming.
We first prove a technical lemma relating different expectiles of the Q-function, and then derive our main result regarding the optimality of our method.

For the sake of simplicity, we introduce the following notation for our analysis. Let $\E^\tau_{x\sim X}[x]$ be a $\tau^\text{th}$ expectile of $X$ (e.g., $\E^{0.5}$ corresponds to the standard expectation). Then, we define $V_{\tau}(s)$ and $Q_\tau(s, a)$, which correspond to optimal solutions of Eqn. \ref{eqn:fit_v} and \ref{eqn:fit_q} correspondingly, recursively as:
\begin{align*}
    V_\tau(s) &= \E^\tau_{a\sim \mu(\cdot|s)} [Q_\tau(s, a)], \\
    Q_\tau(s,a)&=r(s,a)+\gamma\E_{s'\sim p(\cdot|s,a)}[V_\tau(s')].
\end{align*}

\begin{lemma}
\label{lem:2}
For all $s$, $\tau_1$ and $\tau_2$ such that $\tau_1 < \tau_2$ we get
$$
V_{\tau_1}(s) \le V_{\tau_2} (s).
$$
\end{lemma}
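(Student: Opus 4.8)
The plan is to recast the recursive definition of $V_\tau$ as the fixed point of a Bellman-style operator indexed by $\tau$, and then compare the fixed points for $\tau_1 < \tau_2$ by a monotone value-iteration argument. Concretely, for a value function $V : \St \to \R$ define
\begin{equation*}
(\mathcal{T}_\tau V)(s) = \E^\tau_{a\sim\mu(\cdot|s)}\!\left[\, r(s,a) + \gamma\, \E_{s'\sim p(\cdot|s,a)}[V(s')]\,\right],
\end{equation*}
so that the pair $(V_\tau, Q_\tau)$ from the statement is exactly the fixed point $V_\tau = \mathcal{T}_\tau V_\tau$. The goal $V_{\tau_1}(s) \le V_{\tau_2}(s)$ then becomes a comparison of the fixed points of $\mathcal{T}_{\tau_1}$ and $\mathcal{T}_{\tau_2}$.

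The argument rests on three properties of the expectile functional $\E^\tau$, which I would establish or quote first. First, \emph{monotonicity in the argument}: if $X \le Y$ pointwise then $\E^\tau[X] \le \E^\tau[Y]$; this follows by inspecting the first-order optimality condition $\tau\,\E[(X-m)_+] = (1-\tau)\,\E[(m-X)_+]$ defining the expectile, whose left-minus-right side is decreasing in $m$ and ordered consistently with $X \le Y$. Second, \emph{monotonicity in $\tau$}: for a fixed random variable, $\tau_1 < \tau_2 \Rightarrow \E^{\tau_1}[X] \le \E^{\tau_2}[X]$, which is precisely the fact already used in the proof of \Cref{lem:1}. Third, \emph{non-expansiveness}: from translation equivariance $\E^\tau[X+c]=\E^\tau[X]+c$ together with the first property, one gets $|\E^\tau[X]-\E^\tau[Y]|\le \|X-Y\|_\infty$. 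The first and third properties, combined with the factor $\gamma$ and the (ordinary) expectation over $s'$, make $\mathcal{T}_\tau$ monotone and a $\gamma$-contraction in the supremum norm, so each $V_\tau$ is its unique fixed point and value iteration from any bounded initialization converges to it.

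With these in hand the comparison is short. The monotonicity in $\tau$, applied to the fixed random variable $r(s,a)+\gamma\,\E_{s'}[V_{\tau_1}(s')]$, gives $\mathcal{T}_{\tau_1} V_{\tau_1} \le \mathcal{T}_{\tau_2} V_{\tau_1}$ pointwise, i.e.
\begin{equation*}
V_{\tau_1} = \mathcal{T}_{\tau_1} V_{\tau_1} \le \mathcal{T}_{\tau_2} V_{\tau_1}.
\end{equation*}
Applying the monotone operator $\mathcal{T}_{\tau_2}$ repeatedly to this base inequality, the iterates $\mathcal{T}_{\tau_2}^n V_{\tau_1}$ form a nondecreasing sequence in $n$ that stays above $V_{\tau_1}$; by the contraction property it converges to the unique fixed point $V_{\tau_2}$. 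Passing to the limit yields $V_{\tau_1} \le V_{\tau_2}$, as claimed. The analogous inequality for $Q_\tau$ then follows immediately, since $Q_\tau(s,a) = r(s,a) + \gamma\,\E_{s'}[V_\tau(s')]$ is monotone in $V_\tau$.

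I expect the main obstacle to be the rigorous justification of the limit step rather than the algebra: one must confirm that $\mathcal{T}_{\tau_2}$ genuinely contracts, so that the increasing iterates actually converge to $V_{\tau_2}$ and the order inequality can be passed to the limit. This hinges entirely on the non-expansiveness of the expectile (the third property), whose careful verification — establishing translation equivariance and the resulting $L^\infty$ bound — is the delicate part. The two monotonicity facts are comparatively routine, with monotonicity in $\tau$ already granted by \Cref{lem:1}.
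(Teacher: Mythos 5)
Your proof is correct and takes essentially the same route as the paper's: the paper's displayed chain of inequalities is precisely your iteration $\mathcal{T}_{\tau_2}^{n}V_{\tau_1}$ written out term by term, resting on the same two ingredients (monotonicity of the expectile in $\tau$ for the base inequality $V_{\tau_1}\le\mathcal{T}_{\tau_2}V_{\tau_1}$, and monotonicity of the backup operator in its argument to propagate it inward). If anything your write-up is the more rigorous of the two, since where the paper ends with a ``$\vdots$'' you supply the missing justification --- the $\gamma$-contraction of $\mathcal{T}_{\tau_2}$ via translation equivariance and non-expansiveness of the expectile --- that guarantees the iterates actually converge to $V_{\tau_2}$ so the inequality survives the limit.
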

\begin{proof}
The proof follows the policy improvement proof~\citep{sutton2018reinforcement}. See \Cref{app:proofs}.
\end{proof}

\label{sec:iql}
\begin{corollary}
For any $\tau$ and $s$ we have
$$
V_\tau(s)\le\max_{\substack{a\in \A \\\text{s.t. }\pi_\beta(a|s) > 0}}Q^*(s,a)
$$
where $V_\tau(s)$ is defined as above and $Q^*(s,a)$ is an optimal state-action value function constrained to the dataset and defined as
$$
Q^*(s, a) = r(s,a) + \gamma \E_{s'\sim p(\cdot|s,a)}\left[\max_{\substack{a'\in \A \\\text{s.t. }\pi_\beta(a'|s') > 0}}Q^*(s',a')\right].
$$
\label{theorem:cor}
\end{corollary}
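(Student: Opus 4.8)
The plan is to reduce the statement to one structural fact about expectiles combined with the contraction property of the support-constrained Bellman optimality operator, which lets me avoid the circularity that a direct comparison of $Q_\tau$ and $Q^*$ would otherwise hit. Throughout I identify the action distribution $\mu$ used in the definition of $V_\tau$ with the behavior policy $\pi_\beta$, as in the dataset objectives.

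First I would record the single property of expectiles that I need: for $\tau \in (0,1)$, the $\tau$-expectile of a bounded random variable never exceeds the supremum of its support (this is the same monotone, within-support behavior underlying Lemma~\ref{lem:1}). Applying this to the random variable $Q_\tau(s,a)$ with $a \sim \pi_\beta(\cdot|s)$, and substituting the recursion $Q_\tau(s,a) = r(s,a) + \gamma\,\E_{s'}[V_\tau(s')]$, gives the pointwise one-step inequality
$$V_\tau(s) = \E^\tau_{a\sim\pi_\beta(\cdot|s)}[Q_\tau(s,a)] \le \max_{\substack{a\in\A\\ \text{s.t. } \pi_\beta(a|s)>0}}\Big(r(s,a) + \gamma\,\E_{s'\sim p(\cdot|s,a)}[V_\tau(s')]\Big) =: (\mathcal{T}^* V_\tau)(s),$$
where $\mathcal{T}^*$ is the support-constrained Bellman optimality operator. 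By the definition of $Q^*$ in the statement, the function $V^*(s) := \max_{a:\pi_\beta(a|s)>0} Q^*(s,a)$ is a fixed point of $\mathcal{T}^*$.

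Second I would bootstrap this weak, single-step estimate into the claimed global bound. The operator $\mathcal{T}^*$ is monotone (order-preserving) and a $\gamma$-contraction in the sup-norm, so $V^*$ is its unique fixed point and $(\mathcal{T}^*)^n V \to V^*$ for any bounded $V$. Starting from $V_\tau \le \mathcal{T}^* V_\tau$ and applying $\mathcal{T}^*$ repeatedly, monotonicity yields $V_\tau \le \mathcal{T}^* V_\tau \le (\mathcal{T}^*)^2 V_\tau \le \cdots \le (\mathcal{T}^*)^n V_\tau$, and letting $n \to \infty$ gives $V_\tau(s) \le V^*(s) = \max_{a:\pi_\beta(a|s)>0} Q^*(s,a)$ for every $s$, which is exactly the corollary.

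The main obstacle is precisely this apparent circularity: the naive attempt to show $Q_\tau \le Q^*$ directly reduces, via the two recursions, to showing $V_\tau \le V^*$, i.e.\ to what we want to prove. The resolution is to phrase everything through $\mathcal{T}^*$ and to notice that the expectile-below-supremum property only delivers the weak statement $V_\tau \le \mathcal{T}^* V_\tau$; it is the monotone-contraction structure of $\mathcal{T}^*$ that upgrades this into the fixed-point comparison $V_\tau \le V^*$. An equivalent route, closer in spirit to the preceding lemmas, is to invoke Lemma~\ref{lem:2} so that $V_\tau$ is non-decreasing in $\tau$, bound $V_\tau \le \lim_{\tau'\to 1} V_{\tau'}$, and then use Lemma~\ref{lem:1} to identify this limit with the solution of the support-constrained Bellman optimality equation; the delicate point there is justifying the interchange of the $\tau\to 1$ limit with the Bellman recursion, which the contraction argument cleanly avoids.
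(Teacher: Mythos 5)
Your proof is correct and rests on the same observation as the paper's one-line argument: the expectile of $Q_\tau(s,\cdot)$ over the support of $\pi_\beta(\cdot|s)$ is a convex combination of supported values and hence bounded by their maximum, which is exactly your one-step inequality $V_\tau \le \mathcal{T}^* V_\tau$. The paper leaves the remaining bootstrapping implicit, whereas you correctly supply it via the monotonicity and $\gamma$-contraction of the support-constrained Bellman optimality operator, iterating to its fixed point $V^*$.
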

\begin{proof}
The proof follows from the observation that convex combination is smaller than maximum.
\end{proof}
\begin{theorem}
\label{theorem:iql}
$$
\lim_{\tau \rightarrow 1} V_\tau(s) = \max_{\substack{a\in \A \\\text{s.t. }\pi_\beta(a|s) > 0}}Q^*(s,a).
$$
\end{theorem}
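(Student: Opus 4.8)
The plan is to combine the monotonicity supplied by Lemma~\ref{lem:2} with the supremum characterization of Lemma~\ref{lem:1} in a two-sided squeeze, and then to identify the resulting limit as the fixed point of the support-constrained Bellman optimality operator. First I would record that the limit $V_1(s) := \lim_{\tau\to1} V_\tau(s)$ exists for every $s$: by Lemma~\ref{lem:2} the map $\tau \mapsto V_\tau(s)$ is non-decreasing, and by the Corollary it is bounded above by $\max_{a : \pi_\beta(a|s) > 0} Q^*(s,a)$, so a bounded monotone limit exists. Since $Q_\tau(s,a) = r(s,a) + \gamma\E_{s'\sim p(\cdot|s,a)}[V_\tau(s')]$ depends on $\tau$ only through $V_\tau$, the pointwise limit $Q_1(s,a) := r(s,a) + \gamma\E_{s'\sim p(\cdot|s,a)}[V_1(s')]$ also exists by monotone convergence, and $Q_\tau \uparrow Q_1$.

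The heart of the argument is to show $V_1(s) = \max_{a : \pi_\beta(a|s) > 0} Q_1(s,a) =: M(s)$, for which I would use two ingredients: (i) expectiles are monotone with respect to pointwise domination of the underlying variable, i.e.\ $X \le Y$ implies $\E^\tau[X] \le \E^\tau[Y]$; and (ii) Lemma~\ref{lem:1}, that for a \emph{fixed} bounded variable the $\tau$-expectile tends to the supremum of its support as $\tau\to1$. For the upper bound, $Q_\tau(s,\cdot) \le Q_1(s,\cdot)$ gives $V_\tau(s) = \E^\tau_a[Q_\tau(s,a)] \le \E^\tau_a[Q_1(s,a)]$, and letting $\tau\to1$ with the fixed variable $Q_1(s,\cdot)$ yields $V_1(s) \le M(s)$. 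For the lower bound, fix $\tau_0 < 1$; for all $\tau \ge \tau_0$ we have $Q_\tau(s,\cdot) \ge Q_{\tau_0}(s,\cdot)$, so $V_\tau(s) \ge \E^\tau_a[Q_{\tau_0}(s,a)]$, and letting $\tau\to1$ with the fixed variable $Q_{\tau_0}(s,\cdot)$ gives $V_1(s) \ge \sup_a Q_{\tau_0}(s,a)$. Since $Q_{\tau_0} \uparrow Q_1$, sending $\tau_0 \to 1$ and using monotone convergence yields $V_1(s) \ge M(s)$, so $V_1(s) = M(s)$.

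Finally I would identify $Q_1$ with $Q^*$. Substituting $V_1(s') = \max_{a' : \pi_\beta(a'|s') > 0} Q_1(s',a')$ into the recursion for $Q_1$ gives
$$Q_1(s,a) = r(s,a) + \gamma\,\E_{s'\sim p(\cdot|s,a)}\Big[\max_{a' : \pi_\beta(a'|s') > 0} Q_1(s',a')\Big],$$
which is exactly the support-constrained Bellman optimality equation defining $Q^*$. As the associated operator is a $\gamma$-contraction in the sup norm, it has a unique fixed point, so $Q_1 = Q^*$, and hence $\lim_{\tau\to1} V_\tau(s) = V_1(s) = M(s) = \max_{a : \pi_\beta(a|s) > 0} Q^*(s,a)$.

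The step I expect to be the main obstacle is interchanging the limit $\tau\to1$ with the expectile operator, since the underlying variable $Q_\tau(s,\cdot)$ itself varies with $\tau$ and Lemma~\ref{lem:1} applies only to a fixed distribution. Freezing the variable at level $\tau_0$ for the lower bound and at $Q_1$ for the upper bound, and only afterwards sending $\tau_0\to1$, is what lets me invoke Lemma~\ref{lem:1} on a fixed support each time; this relies on the monotonicity of expectiles in the random variable, which I would verify from the first-order optimality condition of the asymmetric least-squares objective. Care is also needed to ensure each $Q_\tau(s,\cdot)$ has bounded support (guaranteed by bounded rewards and $\gamma < 1$) and that the constrained maximum over actions is attained.
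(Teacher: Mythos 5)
Your proposal is correct, and it uses the same ingredients the paper does---the monotonicity of $V_\tau$ in $\tau$ (Lemma~\ref{lem:2}), the upper bound from \Cref{theorem:cor}, and the expectile-to-supremum limit of Lemma~\ref{lem:1}---but it is substantially more complete than the paper's own proof, which consists of the single sentence ``follows from combining Lemma~\ref{lem:1} and \Cref{theorem:cor}.'' The step you flag as the main obstacle is precisely what that sentence elides: Lemma~\ref{lem:1} applies to a \emph{fixed} bounded random variable, whereas in $V_\tau(s)=\E^\tau_{a}[Q_\tau(s,a)]$ the variable $Q_\tau(s,\cdot)$ moves with $\tau$. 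Your two-sided squeeze---freezing the variable at $Q_1$ for the upper bound and at $Q_{\tau_0}$ for the lower bound, then sending $\tau_0\to 1$ using the monotonicity of expectiles under pointwise domination---is a legitimate way to close that gap, and the final identification of $Q_1$ with $Q^*$ via the $\gamma$-contraction property of the support-constrained Bellman operator is also a step the paper leaves implicit (the paper never argues that the limit satisfies the $Q^*$ fixed-point equation, only that it is sandwiched appropriately). Two minor points to tidy up if you write this in full: the ``$\max$ over $a$ with $\pi_\beta(a|s)>0$'' should really be read as the essential supremum of $Q(s,\cdot)$ under $\pi_\beta(\cdot|s)$ for continuous action spaces (this is what Lemma~\ref{lem:1} actually delivers), and the existence and uniqueness of the recursively defined pair $(V_\tau,Q_\tau)$ itself rests on the expectile Bellman operator being a contraction, which both you and the paper take for granted.
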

\begin{proof}
Follows from combining Lemma \ref{lem:1} and  \Cref{theorem:cor}.
\end{proof}
Therefore, for a larger value of $\tau < 1$, we get a better approximation of the maximum. On the other hand, it also becomes a more challenging optimization problem. Thus, we treat $\tau$ as a hyperparameter.
Due to the property discussed in \Cref{theorem:iql} we dub our method implicit Q-learning (\ourname). We also emphasize that our value learning method defines the entire spectrum of methods between SARSA ($\tau=0.5$) and Q-Learning ($\tau \rightarrow 1$).

\section{Experimental Evaluation}

Our experiments aim to evaluate our method comparatively, in contrast to prior offline RL methods, and in particular to understand how our approach compares both to single-step methods and multi-step dynamic programming approaches. We will first demonstrate the benefits of multi-step dynamic programming methods, such as ours, in contrast to single-step methods, showing that on some problems this difference can be extremely large. We will then compare \ourname with state-of-the-art single-step and multi-step algorithms on the D4RL~\citep{fu2020d4rl} benchmark tasks, studying the degree to which we can learn effective policies using only the actions in the dataset. We examine domains that contain near-optimal trajectories, where single-step methods perform well, as well as domains with no optimal trajectories at all, which require multi-step dynamic programming. Finally, we will study how \ourname compares to prior methods when finetuning with online RL starting from an offline RL initialization.

\subsection{The Difference Between One-Step Policy Improvement and \ourname}
\label{sec:example}

We first use a simple maze environment to illustrate the importance of multi-step dynamic programming for offline RL.
The maze has a u-shape, a single start state, and a single goal state (see \Cref{fig:umaze}).
The agent receives a reward of 10 for entering the goal state and zero reward for all other transitions. With a probability of $0.25$, the agent transitions to a random state, and otherwise to the commanded state.
The dataset consists of 1 optimal trajectory and 99 trajectories with uniform random actions.
Due to a short horizon of the problem, we use $\gamma=0.9$.

\begin{figure}
\begin{subfigure}{0.233\textwidth}
    \centering
    \includegraphics[width=0.8\textwidth]{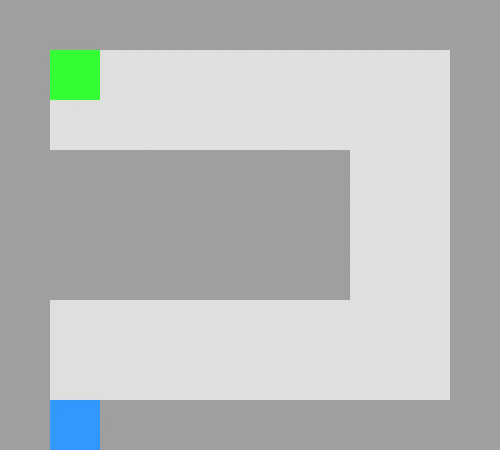}
    \caption{toy maze MDP \label{fig:umaze}}
\end{subfigure}%
\hspace{0.01mm}
\begin{subfigure}{0.235\textwidth}
    \centering
    \includegraphics[width=0.8\textwidth]{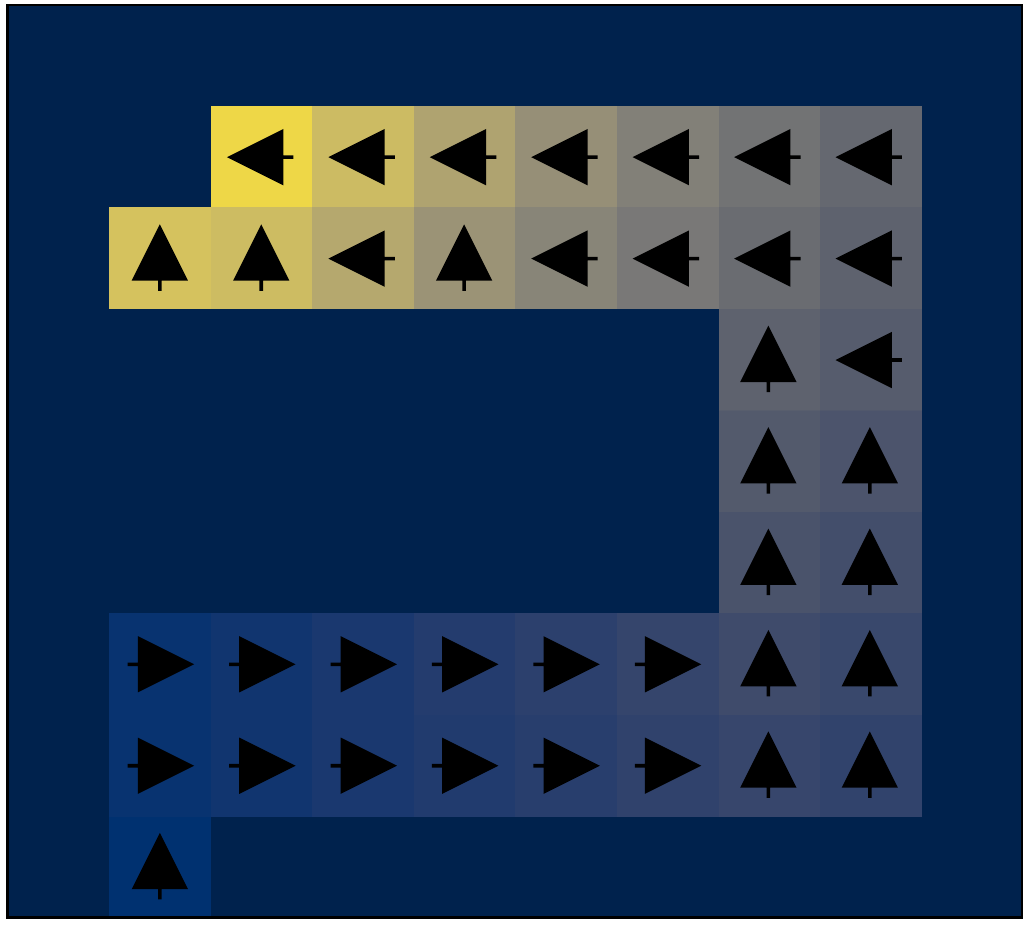}
    \caption{true optimal $V^\star$}
\end{subfigure}%
\hspace{0.01mm}
\begin{subfigure}{0.235\textwidth}
    \centering
    \includegraphics[width=0.8\textwidth]{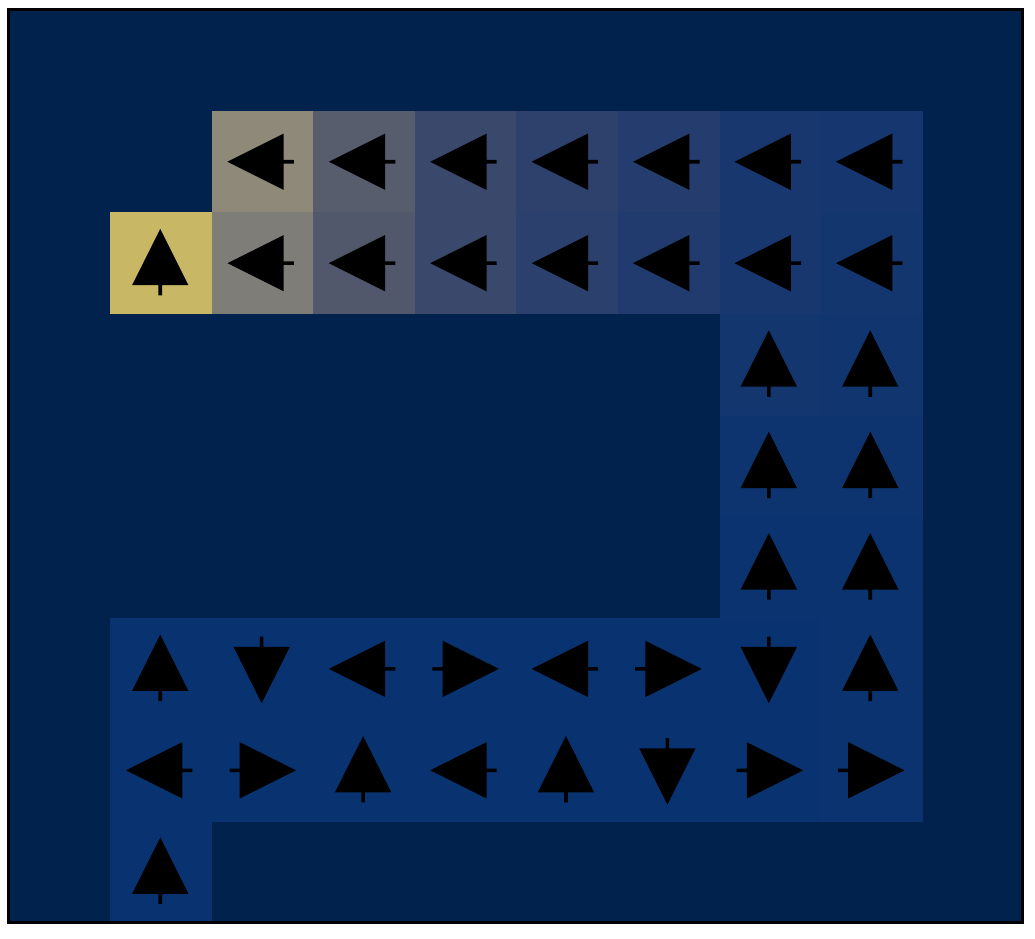}
    \caption{One-step Policy Eval.}
\end{subfigure}%
\begin{subfigure}{0.28\textwidth}
    \centering
    \includegraphics[width=0.8\textwidth]{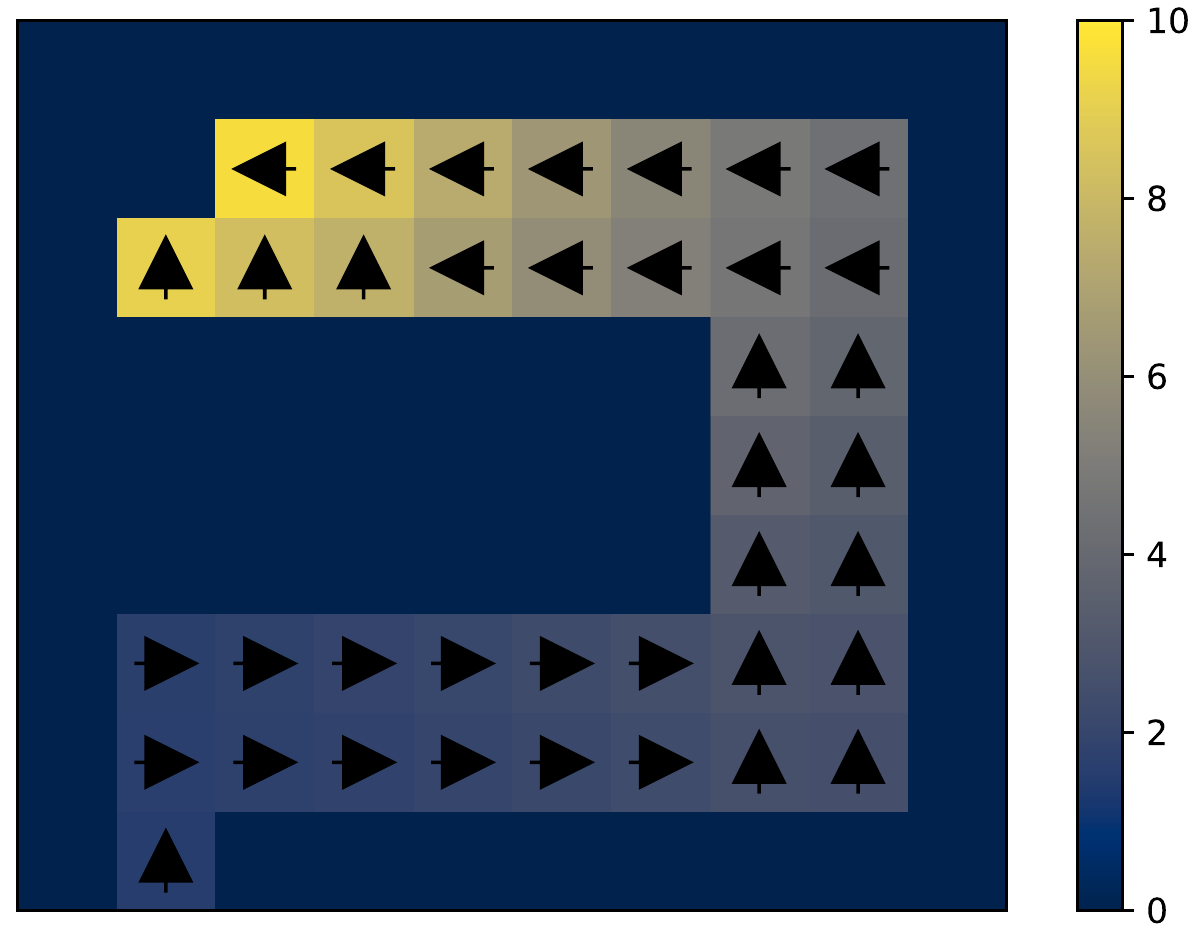}
    \caption{\ourname}
\end{subfigure}
\caption{Evaluation of our algorithm on a toy umaze environment (a).
When the static dataset is heavily corrupted by suboptimal actions, one-step policy evaluation
results in a value function that degrades to zero far from the rewarding states too quickly (c). Our algorithm aims to learn a near-optimal value function, combining the best properties of SARSA-style evaluation with the ability to perform multi-step dynamic programming, leading to value functions that are much closer to optimality (shown in (b)) and producing a much better policy (d).\label{fig:illustratiev}}
\end{figure}

\Cref{fig:illustratiev} (c, d) %
illustrates the difference between single-step methods, in this case represented by 
Onepstep RL~\citep{brandfonbrener2021offline, wang2018exponentially} and \ourname with $\tau=0.95$.
Although states closer to the high reward state will still have higher values, these values decay much faster as we move further away than they would for the optimal value function, and the resulting policy is highly suboptimal.
Since \ourname (d) performs iterative dynamic programming, it correctly propagates the signal, and the values are no longer dominated by noise. The resulting value function closely matches the true optimal value function (b).

\subsection{Comparisons on Offline RL Benchmarks}

\begin{table}[t]\centering
\begin{threeparttable}
\caption{Averaged normalized scores on MuJoCo locomotion and Ant Maze tasks. Our method outperforms prior methods on the challenging Ant Maze tasks, which require dynamic programming, and is competitive with the best prior methods on the locomotion tasks.  %
}\label{tab:d4rl}
\scriptsize
\begin{tabular}{l||rrrrrrr|r}
Dataset &BC &10\%BC &DT &AWAC &Onestep RL &TD3+BC &CQL & \ourname (Ours) \\\hline
halfcheetah-medium-v2 &42.6 &42.5 &42.6 &43.5 &\textbf{48.4} &\textbf{48.3} &44.0 &\textbf{47.4} \\
hopper-medium-v2 &52.9 &56.9 &\textbf{67.6} &57.0 &59.6 &59.3 &58.5 &\textbf{66.3} \\
walker2d-medium-v2 &75.3 &75.0 &74.0 &72.4 &\textbf{81.8} &83.7 &72.5 &78.3 \\
halfcheetah-medium-replay-v2 &36.6 &40.6 &36.6 &40.5 &38.1 &\textbf{44.6} &\textbf{45.5} &\textbf{44.2} \\
hopper-medium-replay-v2 &18.1 &75.9 &82.7 &37.2 &\textbf{97.5} &60.9 &\textbf{95.0} &\textbf{94.7} \\
walker2d-medium-replay-v2 &26.0 &62.5 &66.6 &27.0 &49.5 &\textbf{81.8} &77.2 &73.9 \\
halfcheetah-medium-expert-v2 &55.2 &\textbf{92.9} &86.8 &42.8 &
\textbf{93.4} &\textbf{90.7} &\textbf{91.6} &86.7 \\
hopper-medium-expert-v2 &52.5 &\textbf{110.9} &\textbf{107.6} &55.8 &103.3 &98.0 &\textbf{105.4} &91.5 \\
walker2d-medium-expert-v2 &\textbf{107.5} &\textbf{109.0} &\textbf{108.1} &74.5 &\textbf{113.0} &\textbf{110.1} &\textbf{108.8} &\textbf{109.6} \\ \hline
locomotion-v2 total &466.7 &\textbf{666.2} &\textbf{672.6} &450.7 &\textbf{684.6} &\textbf{677.4} &\textbf{698.5} &\textbf{692.4} \\ \hline
antmaze-umaze-v0 &54.6 &62.8 &59.2 &56.7 &64.3 &78.6 &74.0 &\textbf{87.5} \\
antmaze-umaze-diverse-v0 &45.6 &50.2 &53.0 &49.3 &60.7 &71.4 &\textbf{84.0} &62.2 \\
antmaze-medium-play-v0 &0.0 &5.4 &0.0 &0.0 &0.3 &10.6 &61.2 &\textbf{71.2} \\
antmaze-medium-diverse-v0 &0.0 &9.8 &0.0 &0.7 &0.0 &3.0 &53.7 &\textbf{70.0} \\
antmaze-large-play-v0 &0.0 &0.0 &0.0 &0.0 &0.0 &0.2 &15.8 &\textbf{39.6} \\
antmaze-large-diverse-v0 &0.0 &6.0 &0.0 &1.0 &0.0 &0.0 &14.9 &\textbf{47.5} \\ \hline
antmaze-v0 total &100.2 &134.2 &112.2 &107.7 &125.3 &163.8 &303.6 &\textbf{378.0} \\ 
\hline \hline
total &566.9 &800.4 &784.8 &558.4 &809.9 &841.2 &1002.1 &\textbf{1070.4} \\
\hline \hline
kitchen-v0 total & \textbf{154.5} & - & -& - & - & - & 144.6 & \textbf{159.8} \\
adroit-v0 total & 104.5 & - & -& - & - & - & 93.6 & \textbf{118.1} \\
\hline \hline
total+kitchen+adroit &825.9 & - & - & - & - & - & 1240.3 &\textbf{1348.3} \\
\hline
runtime & 10m & 10m & 960m & 20m &$\approx$ 20m$^*$ & 20m & 80m & 20m
\end{tabular}
\begin{tablenotes}
\item $^*$: Note that it is challenging to compare one-step and multi-step methods directly. Also, \cite{brandfonbrener2021offline} reports results for a set of hyperparameters, such as batch and network size, that is significantly different from other methods. We report results for the original hyperparameters and runtime for a comparable set of hyperparameters.
\end{tablenotes}
\end{threeparttable}
\end{table}

\begin{wrapfigure}{R}{0.5\textwidth}
    \centering
    \includegraphics[width=0.5\textwidth]{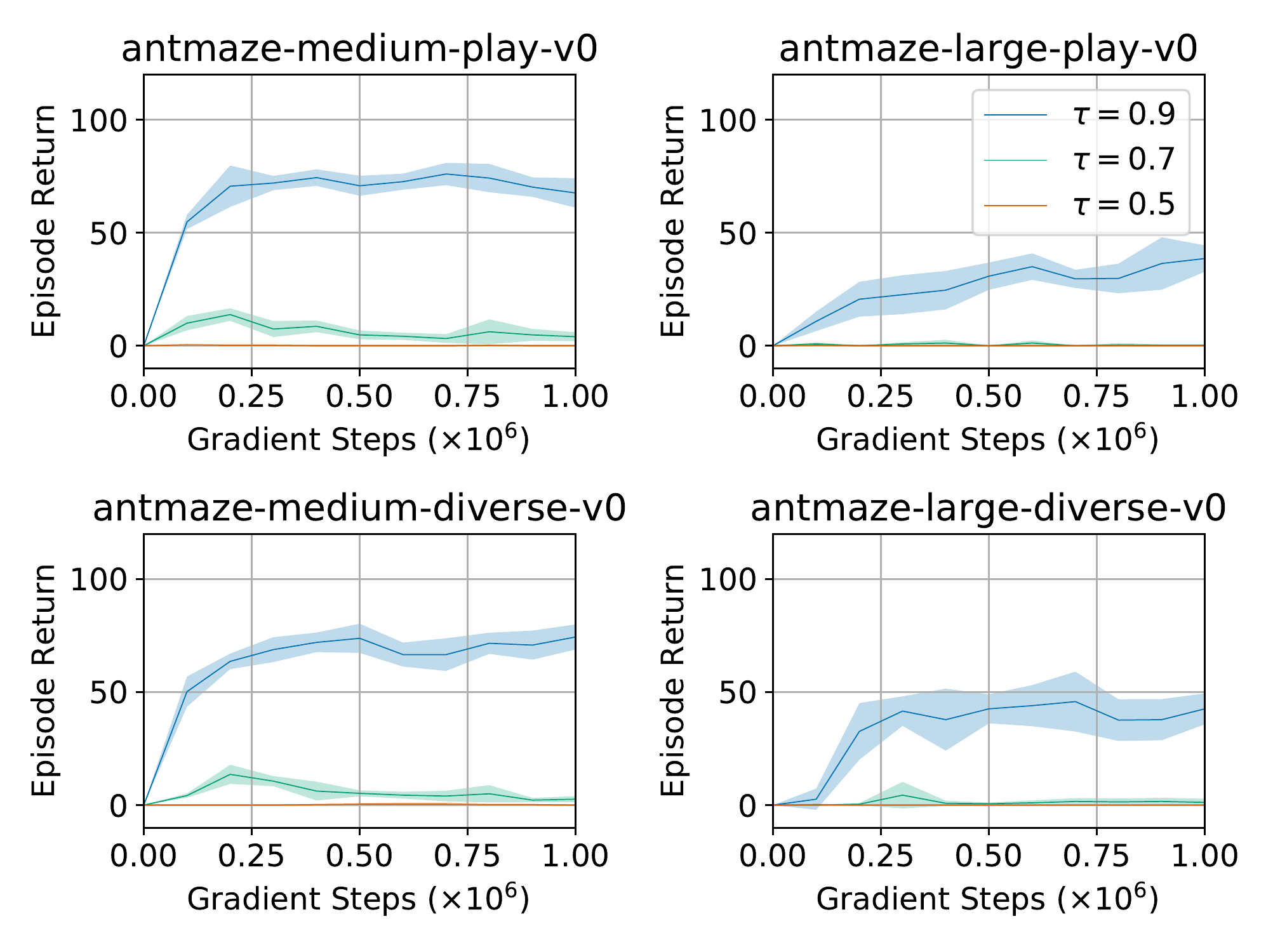}
    \caption{Estimating a larger expectile $\tau$ is crucial for antmaze tasks that require dynamical programming ('stitching').
    }
    \label{fig:antmaze}
\end{wrapfigure}

Next, we evaluate our approach on the D4RL benchmark in comparison to prior methods (see \Cref{tab:d4rl}).
The MuJoCo tasks in D4RL consist of the Gym locomotion tasks, the Ant Maze tasks, and the Adroit and Kitchen robotic manipulation environments. Some prior works, particularly those proposing one-step methods, focus entirely on the Gym locomotion tasks. However, these tasks include a significant fraction of near-optimal trajectories in the dataset. In contrast, the Ant Maze tasks, especially the medium and large ones, contain very few or no near-optimal trajectories, making them very challenging for one-step methods. These domains require ``stitching'' parts of suboptimal trajectories that travel between different states to find a path from the start to the goal of the maze~\citep{fu2020d4rl}. As we will show, multi-step dynamic programming is essential in these domains. The Adroit and Kitchen tasks are comparatively less discriminating, and we found that most RL methods perform similarly to imitation learning in these domains~\citep{florence2021implicit}. We therefore focus our analysis on the Gym locomotion and Ant Maze domains, but include full Adroit and Kitchen results in \Cref{app:experiments} for completeness.

\paragraph{Comparisons and baselines.} We compare to methods that are representative of both multi-step dynamic programming and one-step approaches. In the former category, we compare to CQL~\citep{kumar2020conservative}, TD3+BC~\citep{fujimoto2021minimalist}, and AWAC~\citep{nair2020awac}. In the latter category, we compare to Onestep RL~\citep{brandfonbrener2021offline} and Decision Transformers~\citep{chen2021decision}. We obtained the Decision Transformers results on Ant Maze subsets of D4RL tasks using the author-provided implementation\footnote{https://github.com/kzl/decision-transformer} and following authors instructions communicated over email. We obtained results for TD3+BC and Onestep RL (Exp. Weight) directly from the authors. Note that \citet{chen2021decision} and \citet{brandfonbrener2021offline}
incorrectly report results for some prior methods, such as CQL, using the ``-v0'' environments. These generally produce lower scores than the ``-v2'' environments that these papers use for their own methods. We use the ``-v2'' environments for all methods to ensure a fair comparison, resulting in higher values for CQL. Because of this fix, our reported CQL scores are higher than all other prior methods. We obtained results for ``-v2'' datasets using an author-suggested implementation.\footnote{https://github.com/young-geng/CQL}
On the Gym locomotion tasks (halfcheetah, hopper, walker2d), we find that \ourname performs comparably to the best performing prior method, CQL. On the more challenging Ant Maze task, \ourname outperforms CQL, and outperforms the one-step methods by a very large margin.

\paragraph{Runtime.} Our approach is also computationally faster than the baselines (see \Cref{tab:d4rl}). For the baselines, we measure runtime for our reimplementations of the methods in JAX~\citep{jax2018github} built on top of JAXRL~\citep{jaxrl}, which are typically faster than the original implementations. For example, the original implementation of CQL takes more than 4 hours to perform 1M updates, while ours takes only 80 minutes. Even so, \ourname still requires about 4x less time than our reimplementation of CQL on average, and is comparable to the fastest prior one-step methods. We did not reimplement Decision Transformers due to their complexity and report runtime of the original implementation.

\paragraph{Effect of $\tau$ hyperparameter.} We also demonstrate that it is crucial to compute a larger expectile on tasks that require ``stitching'' (see \Cref{fig:antmaze}).
With larger values of $\tau$, our method approximates $Q$-learning better, leading to better performance on the Ant Maze tasks.

\subsection{Online Fine-tuning after Offline RL}
\label{sec:finetune}

\begin{wraptable}[19]{r}{0.65\textwidth}
\scriptsize
\begin{tabular}{ l ||p{1.5cm} |p{1.5cm} | p{1.5cm}  }
    \centering
    Dataset & AWAC & CQL & \ourname (Ours) \\
    \hline
    antmaze-umaze-v0 & 56.7 \; $\rightarrow$ 59.0 & 70.1 \; $\rightarrow$ \textbf{99.4} & \textbf{86.7} \; $\rightarrow$ \textbf{96.0} \\
    antmaze-umaze-diverse-v0 & 49.3 \; $\rightarrow$ 49.0 & 31.1 \; $\rightarrow$ \textbf{99.4} & \textbf{75.0} \; $\rightarrow$ 84.0 \\
    antmaze-medium-play-v0 & 0.0 \; \; $\rightarrow$ 0.0 & 23.0 \; $\rightarrow$ 0.0 & \textbf{72.0} \; $\rightarrow$ \textbf{95.0} \\
    antmaze-medium-diverse-v0 & 0.7 \; \; $\rightarrow$ 0.3 & 23.0 \; $\rightarrow$ 32.3 & \textbf{68.3} \; $\rightarrow$ \textbf{92.0} \\
    antmaze-large-play-v0 & 0.0 \; \; $\rightarrow$ 0.0 & 1.0 \; \; $\rightarrow$ 0.0 & \textbf{25.5} \; $\rightarrow$ \textbf{46.0} \\
    antmaze-large-diverse-v0 & 1.0 \; \; $\rightarrow$ 0.0 & 1.0 \; \; $\rightarrow$ 0.0 & \textbf{42.6} \; $\rightarrow$ \textbf{60.7} \\ \hline
    antmaze-v0 total & 107.7 $\rightarrow$ 108.3 & 151.5 $\rightarrow$ 231.1 & \textbf{370.1} $\rightarrow$ \textbf{473.7} \\ \hline
    pen-binary-v0 & \textbf{44.6} \; $\rightarrow$ \textbf{70.3} & 31.2 \; $\rightarrow$ 9.9 & 37.4 \; $\rightarrow$ 60.7 \\
    door-binary-v0 & \textbf{1.3} \; \; $\rightarrow$ \textbf{30.1} & 0.2 \; \; $\rightarrow$ 0.0 & 0.7 \; \; $\rightarrow$ \textbf{32.3} \\
    relocate-binary-v0 & \textbf{0.8} \; \; $\rightarrow$ 2.7 & 0.1 \; \; $\rightarrow$ 0.0 & 0.0 \; \; $\rightarrow$ \textbf{31.0} \\ \hline
    hand-v0 total & \textbf{46.7} \; $\rightarrow$ 103.1 & 31.5 \; $\rightarrow$ 9.9 & 38.1 \; $\rightarrow$ \textbf{124.0} \\ \hline \hline
    total & 154.4 $\rightarrow$ 211.4 & 182.8 $\rightarrow$ 241.0 & \textbf{408.2} $\rightarrow$ \textbf{597.7}
\end{tabular}
\caption{Online finetuning results showing the initial performance after offline RL, and performance after 1M steps of online RL. In all tasks, \ourname is able to finetune to a significantly higher performance than the offline initialization, with final performance that is comparable to or better than the best of either AWAC or CQL on all tasks except pen-binary-v0.}
\label{tab:finetuning}
\end{wraptable}

The policies obtained by offline RL can often be improved with a small amount of online interaction.
\ourname is well-suited for online fine-tuning for two reasons.
First, \ourname has strong offline performance, as shown in the previous section, which provides a good initialization.
Second, \ourname implements a weighted behavioral cloning policy extraction step,
which has previously been shown to allow for better online policy improvement compared to other types of offline constraints~\citep{nair2020awac}.
To evaluate the finetuning capability of various RL algorithms, we first run offline RL on each dataset, then run 1M steps of online RL, and then report the final performance.
We compare to AWAC~\citep{nair2020awac}, which has been proposed specifically for online finetuning, and CQL~\citep{kumar2020conservative}, which showed the best performance among prior methods in our experiments in the previous section. Exact experimental details are provided in Appendix~\ref{app:finetuning}. We use the challenging Ant Maze D4RL domains~\citep{fu2020d4rl}, as well as the high-dimensional dexterous manipulation environments from \citet{rajeswaran2018dextrous}, which \citet{nair2020awac} propose to use to study online adaptation with AWAC. Results are shown in Table~\ref{tab:finetuning}. 
On the Ant Maze domains, \ourname significantly outperforms both prior methods after online finetuning. CQL attains the second best score, while AWAC performs comparatively worse due to much weaker offline initialization. On the dexterous hand tasks, \ourname performs significantly better than AWAC on relocate-binary-v0, comparably on door-binary-v0, and slightly worse on pen-binary-v0, with the best overall score.

\section{Conclusion}

We presented implicit Q-Learning (\ourname), a general algorithm for offline RL that completely avoids any queries to values of out-of-sample actions during training while still enabling multi-step dynamic programming. To our knowledge, this is the first method that combines both of these features. This has a number of important benefits. First, our algorithm is computationally efficient: we can perform 1M updates on one GTX1080 GPU in less than 20 minutes. Second, it is simple to implement, requiring only minor modifications over a standard SARSA-like TD algorithm, and performing policy extraction with a simple weighted behavioral cloning procedure resembling supervised learning. 
Finally, despite the simplicity and efficiency of this method, we show that it attains excellent performance across all of the tasks in the D4RL benchmark, matching the best prior methods on the MuJoCo locomotion tasks, and exceeding the state-of-the-art performance on the challenging ant maze environments, where multi-step dynamic programming is essential for good performance.

\bibliography{iclr2022_conference}

\begin{thebibliography}{28}
\providecommand{\natexlab}[1]{#1}
\providecommand{\url}[1]{\texttt{#1}}
\expandafter\ifx\csname urlstyle\endcsname\relax
  \providecommand{\doi}[1]{doi: #1}\else
  \providecommand{\doi}{doi: \begingroup \urlstyle{rm}\Url}\fi

\bibitem[Bradbury et~al.(2018)Bradbury, Frostig, Hawkins, Johnson, Leary,
  Maclaurin, Necula, Paszke, Vander{P}las, Wanderman-{M}ilne, and
  Zhang]{jax2018github}
James Bradbury, Roy Frostig, Peter Hawkins, Matthew~James Johnson, Chris Leary,
  Dougal Maclaurin, George Necula, Adam Paszke, Jake Vander{P}las, Skye
  Wanderman-{M}ilne, and Qiao Zhang.
\newblock {JAX}: composable transformations of {P}ython+{N}um{P}y programs,
  2018.
\newblock URL \url{http://github.com/google/jax}.

\bibitem[Brandfonbrener et~al.(2021)Brandfonbrener, Whitney, Ranganath, and
  Bruna]{brandfonbrener2021offline}
David Brandfonbrener, William~F Whitney, Rajesh Ranganath, and Joan Bruna.
\newblock Offline rl without off-policy evaluation.
\newblock \emph{arXiv preprint arXiv:2106.08909}, 2021.

\bibitem[Chen et~al.(2021)Chen, Lu, Rajeswaran, Lee, Grover, Laskin, Abbeel,
  Srinivas, and Mordatch]{chen2021decision}
Lili Chen, Kevin Lu, Aravind Rajeswaran, Kimin Lee, Aditya Grover, Michael
  Laskin, Pieter Abbeel, Aravind Srinivas, and Igor Mordatch.
\newblock Decision transformer: Reinforcement learning via sequence modeling.
\newblock \emph{arXiv preprint arXiv:2106.01345}, 2021.

\bibitem[Dabney et~al.(2018{\natexlab{a}})Dabney, Ostrovski, Silver, and
  Munos]{dabney2018implicit}
Will Dabney, Georg Ostrovski, David Silver, and R{\'e}mi Munos.
\newblock Implicit quantile networks for distributional reinforcement learning.
\newblock In \emph{International conference on machine learning}, pp.\
  1096--1105. PMLR, 2018{\natexlab{a}}.

\bibitem[Dabney et~al.(2018{\natexlab{b}})Dabney, Rowland, Bellemare, and
  Munos]{dabney2018distributional}
Will Dabney, Mark Rowland, Marc~G Bellemare, and R{\'e}mi Munos.
\newblock Distributional reinforcement learning with quantile regression.
\newblock In \emph{Thirty-Second AAAI Conference on Artificial Intelligence},
  2018{\natexlab{b}}.

\bibitem[Florence et~al.(2021)Florence, Lynch, Zeng, Ramirez, Wahid, Downs,
  Wong, Lee, Mordatch, and Tompson]{florence2021implicit}
Pete Florence, Corey Lynch, Andy Zeng, Oscar Ramirez, Ayzaan Wahid, Laura
  Downs, Adrian Wong, Johnny Lee, Igor Mordatch, and Jonathan Tompson.
\newblock Implicit behavioral cloning.
\newblock \emph{arXiv preprint arXiv:2109.00137}, 2021.

\bibitem[Fu et~al.(2020)Fu, Kumar, Nachum, Tucker, and Levine]{fu2020d4rl}
Justin Fu, Aviral Kumar, Ofir Nachum, George Tucker, and Sergey Levine.
\newblock D4rl: Datasets for deep data-driven reinforcement learning.
\newblock \emph{arXiv preprint arXiv:2004.07219}, 2020.

\bibitem[Fujimoto \& Gu(2021)Fujimoto and Gu]{fujimoto2021minimalist}
Scott Fujimoto and Shixiang~Shane Gu.
\newblock A minimalist approach to offline reinforcement learning.
\newblock \emph{arXiv preprint arXiv:2106.06860}, 2021.

\bibitem[Fujimoto et~al.(2018)Fujimoto, Hoof, and
  Meger]{fujimoto2018addressing}
Scott Fujimoto, Herke Hoof, and David Meger.
\newblock Addressing function approximation error in actor-critic methods.
\newblock In \emph{International Conference on Machine Learning}, pp.\
  1587--1596. PMLR, 2018.

\bibitem[Fujimoto et~al.(2019)Fujimoto, Meger, and Precup]{fujimoto2019off}
Scott Fujimoto, David Meger, and Doina Precup.
\newblock Off-policy deep reinforcement learning without exploration.
\newblock In \emph{International Conference on Machine Learning}, pp.\
  2052--2062. PMLR, 2019.

\bibitem[Heek et~al.(2020)Heek, Levskaya, Oliver, Ritter, Rondepierre, Steiner,
  and van {Z}ee]{flax2020github}
Jonathan Heek, Anselm Levskaya, Avital Oliver, Marvin Ritter, Bertrand
  Rondepierre, Andreas Steiner, and Marc van {Z}ee.
\newblock {F}lax: A neural network library and ecosystem for {JAX}, 2020.
\newblock URL \url{http://github.com/google/flax}.

\bibitem[Kingma \& Ba(2014)Kingma and Ba]{kingma2014adam}
Diederik~P Kingma and Jimmy Ba.
\newblock Adam: A method for stochastic optimization.
\newblock \emph{arXiv preprint arXiv:1412.6980}, 2014.

\bibitem[Koenker \& Hallock(2001)Koenker and Hallock]{koenker2001quantile}
Roger Koenker and Kevin~F Hallock.
\newblock Quantile regression.
\newblock \emph{Journal of economic perspectives}, 15\penalty0 (4):\penalty0
  143--156, 2001.

\bibitem[Kostrikov(2021)]{jaxrl}
Ilya Kostrikov.
\newblock {JAXRL: Implementations of Reinforcement Learning algorithms in
  JAX.}, 10 2021.
\newblock URL \url{https://github.com/ikostrikov/jaxrl}.

\bibitem[Kostrikov et~al.(2021)Kostrikov, Fergus, Tompson, and
  Nachum]{kostrikov2021offline}
Ilya Kostrikov, Rob Fergus, Jonathan Tompson, and Ofir Nachum.
\newblock Offline reinforcement learning with fisher divergence critic
  regularization.
\newblock In \emph{International Conference on Machine Learning}, pp.\
  5774--5783. PMLR, 2021.

\bibitem[Kumar et~al.(2019)Kumar, Fu, Tucker, and Levine]{kumar2019stabilizing}
Aviral Kumar, Justin Fu, George Tucker, and Sergey Levine.
\newblock Stabilizing off-policy q-learning via bootstrapping error reduction.
\newblock \emph{arXiv preprint arXiv:1906.00949}, 2019.

\bibitem[Kumar et~al.(2020)Kumar, Zhou, Tucker, and
  Levine]{kumar2020conservative}
Aviral Kumar, Aurick Zhou, George Tucker, and Sergey Levine.
\newblock Conservative q-learning for offline reinforcement learning.
\newblock \emph{arXiv preprint arXiv:2006.04779}, 2020.

\bibitem[Kuznetsov et~al.(2020)Kuznetsov, Shvechikov, Grishin, and
  Vetrov]{kuznetsov2020controlling}
Arsenii Kuznetsov, Pavel Shvechikov, Alexander Grishin, and Dmitry Vetrov.
\newblock Controlling overestimation bias with truncated mixture of continuous
  distributional quantile critics.
\newblock In \emph{International Conference on Machine Learning}, pp.\
  5556--5566. PMLR, 2020.

\bibitem[Lange et~al.(2012)Lange, Gabel, and Riedmiller]{lange2012batch}
Sascha Lange, Thomas Gabel, and Martin Riedmiller.
\newblock Batch reinforcement learning.
\newblock In \emph{Reinforcement learning}, pp.\  45--73. Springer, 2012.

\bibitem[Nair et~al.(2020)Nair, Dalal, Gupta, and Levine]{nair2020awac}
Ashvin Nair, Murtaza Dalal, Abhishek Gupta, and Sergey Levine.
\newblock Awac: Accelerating online reinforcement learning with offline
  datasets.
\newblock 2020.

\bibitem[Peng et~al.(2019)Peng, Kumar, Zhang, and Levine]{peng2019advantage}
Xue~Bin Peng, Aviral Kumar, Grace Zhang, and Sergey Levine.
\newblock Advantage-weighted regression: Simple and scalable off-policy
  reinforcement learning.
\newblock \emph{arXiv preprint arXiv:1910.00177}, 2019.

\bibitem[Peters \& Schaal(2007)Peters and Schaal]{peters2007reinforcement}
Jan Peters and Stefan Schaal.
\newblock Reinforcement learning by reward-weighted regression for operational
  space control.
\newblock In \emph{Proceedings of the 24th international conference on Machine
  learning}, pp.\  745--750, 2007.

\bibitem[Rajeswaran et~al.(2018)Rajeswaran, Kumar, Gupta, Schulman, Todorov,
  and Levine]{rajeswaran2018dextrous}
Aravind Rajeswaran, Vikash Kumar, Abhishek Gupta, John Schulman, Emanuel
  Todorov, and Sergey Levine.
\newblock {Learning Complex Dexterous Manipulation with Deep Reinforcement
  Learning and Demonstrations}.
\newblock In \emph{Robotics: Science and Systems}, 2018.
\newblock URL \url{https://arxiv.org/pdf/1709.10087.pdf}.

\bibitem[Srivastava et~al.(2014)Srivastava, Hinton, Krizhevsky, Sutskever, and
  Salakhutdinov]{srivastava2014dropout}
Nitish Srivastava, Geoffrey Hinton, Alex Krizhevsky, Ilya Sutskever, and Ruslan
  Salakhutdinov.
\newblock Dropout: a simple way to prevent neural networks from overfitting.
\newblock \emph{The journal of machine learning research}, 15\penalty0
  (1):\penalty0 1929--1958, 2014.

\bibitem[Sutton \& Barto(2018)Sutton and Barto]{sutton2018reinforcement}
Richard~S Sutton and Andrew~G Barto.
\newblock \emph{Reinforcement learning: An introduction}.
\newblock MIT press, 2018.

\bibitem[Wang et~al.(2018)Wang, Xiong, Han, Sun, Liu, and
  Zhang]{wang2018exponentially}
Qing Wang, Jiechao Xiong, Lei Han, Peng Sun, Han Liu, and Tong Zhang.
\newblock Exponentially weighted imitation learning for batched historical
  data.
\newblock In \emph{NeurIPS}, pp.\  6291--6300, 2018.

\bibitem[Wang et~al.(2020)Wang, Novikov, Zolna, Springenberg, Reed, Shahriari,
  Siegel, Merel, Gulcehre, Heess, et~al.]{wang2020critic}
Ziyu Wang, Alexander Novikov, Konrad Zolna, Jost~Tobias Springenberg, Scott
  Reed, Bobak Shahriari, Noah Siegel, Josh Merel, Caglar Gulcehre, Nicolas
  Heess, et~al.
\newblock Critic regularized regression.
\newblock \emph{arXiv preprint arXiv:2006.15134}, 2020.

\bibitem[Wu et~al.(2019)Wu, Tucker, and Nachum]{wu2019behavior}
Yifan Wu, George Tucker, and Ofir Nachum.
\newblock Behavior regularized offline reinforcement learning.
\newblock \emph{arXiv preprint arXiv:1911.11361}, 2019.

\end{thebibliography}
\bibliographystyle{iclr2022_conference}

\newpage
\appendix

\section{Proofs}

\label{app:proofs}
\subsection{Proof of Lemma \ref{lem:2}}
\begin{proof}
We can rewrite $V_{\tau_1}(s)$ as 

\begin{align*}
    V_{\tau_1}(s) &= \E^{\tau_1}_{a\sim\mu(\cdot|s)}[r(s,a)+\gamma\E_{s'\sim p(\cdot|s,a)}[V_{\tau_1}(s')]] \\
    & \le \E^{\tau_2}_{a\sim\mu(\cdot|s)}[r(s,a)+\gamma\E_{s'\sim p(\cdot|s,a)}[V_{\tau_1}(s')]] \\
    & = \E^{\tau_2}_{a\sim\mu(\cdot|s)}[r(s,a)+\gamma\E_{s'\sim p(\cdot|s,a)}\E^{\tau_1}_{a'\sim\mu(\cdot|s')}[r(s',a')+\gamma\E_{s''\sim p(\cdot|s',a')}[V_{\tau_1}(s'')]] \\
    & \le \E^{\tau_2}_{a\sim\mu(\cdot|s)}[r(s,a)+\gamma\E_{s'\sim p(\cdot|s,a)}\E^{\tau_2}_{a'\sim\mu(\cdot|s')}[r(s',a')+\gamma\E_{s''\sim p(\cdot|s',a')}[V_{\tau_1}(s'')]] \\
    & = \E^{\tau_2}_{a\sim\mu(\cdot|s)}[r(s,a)+\gamma\E_{s'\sim p(\cdot|s,a)}\E^{\tau_2}_{a'\sim\mu(\cdot|s')}[r(s',a')+\gamma\E_{s''\sim p(\cdot|s',a')}\E^{\tau_1}_{a''\sim\mu(\cdot|s'')}[r(s'', a'')+\ldots]] \\
    \vdots \\
    &\le V_{\tau_2}(s)
\end{align*}
\end{proof}

\section{Experimental details}

\label{app:experiments}

\paragraph{Experimental details.} For the MuJoCo locomotion tasks, we average mean returns overs 10 evaluation trajectories and 10 random seeds. For the Ant Maze tasks, we average over 100 evaluation trajectories.  We standardize MuJoCo locomotion task rewards by dividing by the difference of returns of the best and worst trajectories in each dataset. Following the suggestions of the authors of the dataset, we subtract $1$ from rewards for the Ant Maze datasets. We use $\tau=0.9$ and $\beta=10.0$ for Ant Maze tasks and $\tau=0.7$ and $\beta=3.0$ for MuJoCo locomotion tasks.
We use Adam optimizer~\citep{kingma2014adam} with a learning rate $3\cdot10^{-4}$ and 2 layer MLP with ReLU activations and 256 hidden units for all networks. We use cosine schedule for the actor learning rate. We parameterize the policy as a Gaussian distribution with a state-independent standard deviation. We update the target network with soft updates with parameter $\alpha=0.005$. And following \cite{brandfonbrener2021offline} we clip exponentiated advantages to $(-\infty, 100]$. We implemented our method in the JAX~\citep{jax2018github} framework using the Flax~\citep{flax2020github} neural networks library.

\paragraph{Results on Franca Kitchen and Adoit tasks.}
For Franca Kitchen and Adroit tasks we use $\tau=0.7$ and the inverse temperature $\beta=0.5$. Due to the size of the dataset, we also apply Dropout~\citep{srivastava2014dropout} with dropout rate of $0.1$ to regularize the policy network. See complete results in \Cref{tab:franca_adroit}.

\begin{table}[!htp]\centering
\caption{Evaluation on Franca Kitchen and Adroit tasks from D4RL}\label{tab:franca_adroit}
\scriptsize
\begin{tabular}{l||rrrrr|r}
dataset &BC &BRAC-p &BEAR &Onestep RL &CQL &Ours \\ \hline
kitchen-complete-v0 &\textbf{65.0} &0.0 &0.0 &- &43.8 &\textbf{62.5} \\
kitchen-partial-v0 &38.0 &0.0 &0.0 &- &\textbf{49.8} &\textbf{46.3} \\
kitchen-mixed-v0 &\textbf{51.5} &0.0 &0.0 &- &\textbf{51.0} &\textbf{51.0} \\ \hline
kitchen-v0 total &\textbf{154.5} &0.0 &0.0 &- &144.6 &\textbf{159.8} \\ \hline
pen-human-v0 &63.9 &8.1 &-1.0 &- &37.5 &\textbf{71.5} \\
hammer-human-v0 &1.2 &0.3 &0.3 &- &\textbf{4.4} &1.4 \\
door-human-v0 &2 &-0.3 &-0.3 &- &\textbf{9.9} &4.3 \\
relocate-human-v0 &0.1 &-0.3 &-0.3 &- &0.2 &0.1 \\
pen-cloned-v0 &37 &1.6 &26.5 &\textbf{60.0} &39.2 &37.3 \\
hammer-cloned-v0 &0.6 &0.3 &0.3 &\textbf{2.1} &\textbf{2.1} &\textbf{2.1} \\
door-cloned-v0 &0.0 &-0.1 &-0.1 &0.4 &0.4 &\textbf{1.6} \\
relocate-cloned-v0 &-0.3 &-0.3 &-0.3 &-0.1 &-0.1 &-0.2 \\ \hline
adroit-v0 total &104.5 &9.3 &25.1 &- &93.6 &\textbf{118.1} \\ \hline \hline
total &259 &9.3 &25.1 &- &238.2 &\textbf{277.9} \\
\end{tabular}
\end{table}

\section{Finetuning experimental details}
\label{app:finetuning}
For finetuning experiments, we first run offline RL for 1M gradient steps. Then we continue training while collecting data actively in the environment and adding that data to the replay buffer, running 1 gradient update / environment step. All other training details are kept the same between the offline RL phase and the online RL phase. For dextrous manipulation environments~\citep{rajeswaran2018dextrous}, we use $\tau=0.8$ and $\beta=3.0$, 25000 offline training steps, and add Gaussian noise with standard deviation $\sigma=0.03$ to the policy for exploration.

For baselines we compare to the original implementations of AWAC~\citep{nair2020awac} and CQL~\citep{kumar2020conservative}. For AWAC we used \url{https://github.com/rail-berkeley/rlkit/tree/master/rlkit}. We found AWAC to overfit heavily with too many offline gradient steps, and instead used 25000 offline gradient steps as in the original paper. For the dextrous manipulation results, we report average return normalized from 0 to 100 for consistency, instead of success rate at the final timestep, as reported in~\citep{nair2020awac}. For CQL, we used \url{https://github.com/aviralkumar2907/CQL}. Our reproduced results offline are worse than the reported results, particularly on medium and large antmaze environments. We were not able to improve these results after checking for discrepancies with the CQL paper authors and running CQL with an alternative implementation (\url{https://github.com/tensorflow/agents}). Thus, although for offline experiments (Table~\ref{tab:finetuning}) we report results from the original paper, for finetuning experiments we did not have this option and report our own results running CQL in Table~\ref{tab:finetuning}.

\section{Connections to prior work}
In this section, we discuss how our approach is related to prior work on offline reinforcement learning. In particular, we discuss connections to BCQ~\cite{fujimoto2019off}.

Our batch constrained optimization objective is similar to BCQ~\citep{fujimoto2019off}. In particular, the authors of BCQ build on the Q-learning framework and define the policy as
\begin{equation}
\label{eqn:bcq_policy}
\pi(s) = \argmax_{\substack{a\\s.t.(s, a)\in \D}}Q(s,a).
\end{equation}
Note that in contrast to the standard Q-learning, maximization in \Cref{eqn:bcq_policy} is performed only over the state-action pairs that appear in the dataset. In \cite{fujimoto2019off}, these constraints are implemented via fitting a generative model $\mu(\cdot|s)$ on the dataset, sampling several candidate actions from this generative model, and taking an argmax over these actions:
$$
\pi(s) = \argmax_{\{a_i| a_i \sim \mu(\cdot|s), i=1\ldots N\}}Q(s,a_i).
$$
However, this generative model can still produce out-of-dataset actions that will lead to querying undefined Q-values. Thus, our work introduces an alternative way to optimize this objective without requiring an additional density model. Our approach avoids this issue by enforcing the hard constraints via estimating expectiles. Also, it is worth mentioning that a number of sampled actions $N$ in BCQ has similar properties to choosing a particular expectile $\tau$ in our approach.

Note that our algorithm for optimal value approximation does not require an explicit policy, in contrast to other algorithms for offline reinforcement learning for continuous action spaces~\citep{fujimoto2019off, fujimoto2021minimalist, wu2019behavior, kostrikov2021offline, kumar2019stabilizing, kumar2020conservative}. Thus, we do not need to alternate between actor and critic updates, though with continuous actions, we must still extract an actor at the end once the critic converges.

\end{document}